\documentclass{article} 
\usepackage{iclr2027_conference,times}

\usepackage{amsmath,amsfonts,bm}

\def\eqref#1{equation~\ref{#1}}

\def\1{\bm{1}}

\DeclareMathAlphabet{\mathsfit}{\encodingdefault}{\sfdefault}{m}{sl}
\SetMathAlphabet{\mathsfit}{bold}{\encodingdefault}{\sfdefault}{bx}{n}

\usepackage{hyperref}
\usepackage{url}

\usepackage{graphicx}            
\usepackage{booktabs}            
\usepackage{multirow}            
\usepackage{amsfonts}       
\usepackage{nicefrac}       
\usepackage{microtype}      
\usepackage{xcolor}         
\usepackage[table]{xcolor}

\usepackage{multirow}
\usepackage{multicol}
\usepackage{makecell}
\usepackage{siunitx}
 \usepackage{mathtools}

\newcommand{\multhousand}[1]{%
  \num[round-mode=figures, round-precision=3,group-digits=false]{\fpeval{1000*(#1)}}%
}

\usepackage{algorithm}
\usepackage{amsmath}
\usepackage[noend]{algpseudocode}

\usepackage{wrapfig}
\usepackage{caption}
\usepackage{bbm}
\usepackage{booktabs}
\usepackage{tabularx}
\usepackage{array}
\usepackage{subcaption}
\usepackage{tablefootnote}

\usepackage{graphicx}

\usepackage{enumitem}
\usepackage{amsmath}
\usepackage{amsthm}

\newtheorem{Theorem}{Theorem}
\newtheorem{Proposition}{Proposition}
\newtheorem{Corollary}{Corollary}

\makeatletter
\newenvironment{algpanel}[1][]{%
  \par\addvspace{2pt}%
  \captionsetup{
    position=top,
    skip=0pt,
    singlelinecheck=off,
    justification=raggedright,
    labelfont=bf,
    labelsep=space
  }%
  \hrule height.8pt depth0pt
  \kern1pt
  \def\@algpanelcap{#1}%
  \ifx\@algpanelcap\@empty\else\caption*{#1}\fi
  \kern1pt
  \hrule height.4pt
  \kern3pt
}{%
  \kern3pt
  \hrule height.8pt depth0pt
  \par\addvspace{2pt}%
}
\makeatother

\title{BranchIP: Learning Adaptive Equivariant \linebreak Computation for Interatomic Potentials}

\author{%
  Laura Zichi\textsuperscript{*,1} \\
  \And
  Gil Harari\textsuperscript{*,1} \\
  \And
  Chuin Wei Tan\textsuperscript{1} \\
  \And
  Marc L. Descoteaux\textsuperscript{1} \\
  \And
  Albert Zhu\textsuperscript{1} \\
  \And
  Menghang Wang\textsuperscript{1} \\
  \And
  Yoel Zimmermann\textsuperscript{1} \\
  \And
  H.T. Kung\textsuperscript{1} \\
  \And
  Boris Kozinsky\textsuperscript{1,2}
  \AND
  \normalfont
  \textsuperscript{1}Harvard University \\
  \textsuperscript{2}Robert Bosch LLC Research and Technology Center \\
  \textsuperscript{*}Equal contribution.
}

\iclrfinalcopy 
\begin{document}

\maketitle

\begin{abstract}
Equivariant machine learning interatomic potentials (MLIPs) have revolutionized atomistic modeling, but accurate treatment of complex materials and molecular systems demands expensive models. This limits simulation length- and time-scales, with tensor products a key computational bottleneck. The recent emergence of foundation-scale MLIPs further exacerbates this challenge. We present Branch Interatomic Potential (BranchIP), a single-model framework for learned adaptive tensor product computation, trained with a novel distillation loss. In our experiments on two systems of physical interest, a heterogeneous catalysis system and a proton-conducting solid acid electrolyte, BranchIP accelerates MLIPs across model sizes by up to $2.4\times$  while reducing memory usage by up to $2.6\times$. This is achieved while maintaining physical fidelity. Furthermore, the learned adaptive computation provides model interpretability by revealing which interactions demand deeper computation and showing how computational depth relates to chemical complexity and dynamics.
\end{abstract}

\section{Introduction}

\begin{wrapfigure}{r}{0.38\textwidth}
    \vspace{-25pt}
    \centering
    \includegraphics[width=\linewidth]{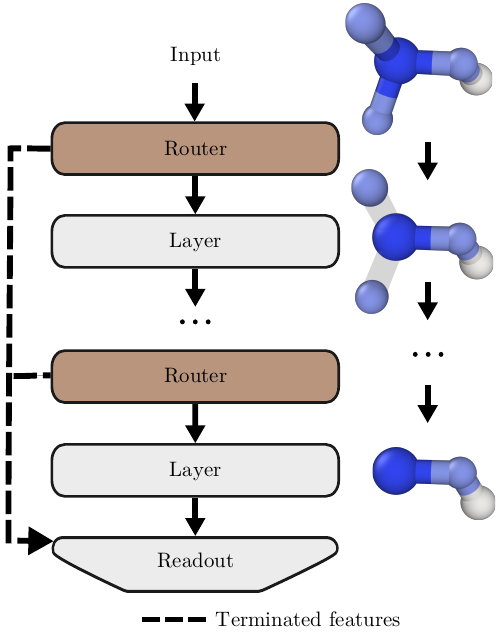}
    \caption{Overview of BranchIP. Routers placed before model layers terminate sufficiently refined features to reduce computation. Features represent local atomic interactions.}
    \label{fig:branchip_overview}
    \vspace{-15pt}
\end{wrapfigure}

Machine learning interatomic potentials (MLIPs) enable near first-principles modeling of diverse materials and molecules ranging from heterogeneous catalysts to biomolecular systems \citep{behler_generalized_2007, batzner_e3-equivariant_2022, batatia_mace_2022, owen_surface_2024, kozinsky_scaling_2023}. 
Practical applications, such as molecular dynamics (MD) simulations, require repeated evaluations of MLIPs, demanding both high predictive accuracy and fast inference~\citep{frenkel_understanding_1996}. 
Like vision and language models, MLIPs benefit from deep neural network (NN) architectures that progressively refine feature representations through successive nonlinear layers. 
The resulting deep MLIPs are largely considered black boxes and their interpretability remains an open challenge~\citep{kalita_machine_2025,zhang_roadmap_2025,creed_six_2026}.

The major breakthroughs in language and vision models from scaling dataset and model sizes \citep{kaplan_scaling_2020, hoffmann_training_2022, zhai_scaling_2022} recently motivated analogous efforts for MLIPs, resulting in pretrained foundation models (FMs) that generalize across diverse chemistry domains \citep{chen_universal_2022, park_scalable_2024, wood_uma_2026, batatia_foundation_2025-2, kavanagh_fast_2026}. However, computational cost hinders the practical use of large-scale MLIPs. The dominant bottleneck is the tensor product (TP), an operation which combines features from neighboring atoms to build the many-body geometric information essential for high accuracy. Most acceleration efforts reduce the cost of individual TPs through efficient formulations or optimized implementations \citep{passaro_reducing_2023, luo_enabling_2024, lee_flashtp_2025, bharadwaj_efficient_2025}. 
Recently, dynamic cutoffs have reduced the number of TPs by adapting the interaction graph \citep{han_smooth_2026}. 
This aligns with broader efforts in adaptive inference that combine potentials of different cost or fidelity \citep{birks_efficient_2026, wood_uma_2026, immel_adaptive-precision_2025}. 
These methods exploit the heterogeneous chemical complexity of realistic atomistic systems; only a subset of atomic environments requires deeper and more expensive architectures for accurate modeling. Existing approaches typically train and evaluate several independent models, resulting in expensive and complex training schemes with exorbitant parameter counts. Furthermore, assigning atomic environments to different models relies on hand-crafted descriptors that require prior knowledge of system dynamics---the very dynamics atomistic modeling often seeks to discover. 

To address these limitations, we introduce Branch Interatomic Potentials (BranchIPs), a single-model, learned adaptive computation framework for equivariant MLIPs inspired by early exits in deep NNs \citep{teerapittayanon_branchynet_2016} (Figure \ref{fig:branchip_overview}). 
As in classification, where early exits were first introduced, some inputs can be handled by shallower networks, while others require the full depth. 
Rather than applying the TP to every feature at every layer, BranchIP removes features from subsequent TPs once further refinement no longer improves accuracy.
The routing patterns reveal the TP refinement, usually representative of the body-order of model representations, required for various atomic environments. This improves the interpretability of the hidden model layers.

We implement BranchIP in the state-of-the-art equivariant MLIP Allegro~\citep{musaelian_learning_2023, tan_high-performance_2026}, termed Branchlegro. We demonstrate Branchlegro on two systems: a heterogeneous catalysis system comprising platinum surfaces with gaseous and adsorbed carbon monoxide (CO/Pt)~\citep{owen_surface_2024} and proton transport in the superprotonic solid acid electrolyte CsH$_2$PO$_4$ (CDP), a system without an obvious chemically motivated spatial partition~\citep{wang_revealing_2025}. For both systems, deriving chemical insight requires MD simulations over large length- and time-scales to sufficiently sample rare events, making computational efficiency essential. 

Our main contributions are:
\begin{enumerate}
    \item We develop a single-model framework for \textbf{feature-level adaptive TP} computation in MLIPs.
    \item We introduce a \textbf{fully learned dynamic router}, trained with a novel distillation objective, that smoothly routes features through TP computation obeying exact energy conservation.
    \item We implement the method in Allegro and prove it retains a twice continuously differentiable potential energy surface (PES) as well as equivariance.
    \item We demonstrate the method on bespoke and foundation MLIPs for CO/Pt and CDP systems, \textbf{accelerating inference and reducing memory usage} with limited degradation in accuracy.
    \item The learned routing provides model interpretability, revealing \textbf{which interactions require deeper equivariant computation} and how this varies with chemistry and dynamics.
\end{enumerate}

\section{Background and related work}
\textbf{MLIPs} use NNs trained on reference data, usually from expensive quantum methods, to approximate the PES. Given atomic positions and chemical species, $\{ \mathbf{r}_i, Z_i\}$, as input, the model predicts a total energy, $E$. Automatic differentiation then yields forces and potentially stresses.

\textbf{Allegro} \citep{musaelian_learning_2023} is a strictly-local, equivariant deep NN architecture for interatomic potentials. The model uses only operations that are equivariant under the three-dimensional Euclidean group, $E(3)$, consisting of rotations, translations, and reflections. As a result, the network predictions transform consistently with the underlying atomic geometry. Allegro decomposes the total energy into pair contributions from neighboring atoms, where neighbors are defined by a fixed cutoff distance. Tensor and scalar features represent each ordered atom pair, and an output multilayer perceptron (MLP) predicts the pair energy from the scalar features. At each layer, a TP combines features from neighboring pairs to incorporate many-body geometric information while preserving equivariance. 

\textbf{TP accelerations} largely focus on cheaper formulations \citep{xu_cartesian-3j_2026, passaro_reducing_2023, luo_enabling_2024} and optimized kernels \citep{lee_flashtp_2025, tan_high-performance_2026, bharadwaj_efficient_2025} and are complementary to BranchIP.

\textbf{Adaptive interatomic potential architectures} seek to reduce inference cost while maintaining accuracy. \textbf{ML-MIX} relies on a user-defined assignment of expensive and inexpensive interatomic potentials to predefined spatial regions \citep{birks_efficient_2026}. Blending forces from different models results in non-conservative dynamics. \textbf{Adaptive-precision potentials} use a continuous switching parameter, computed from hand-crafted descriptors, to mix the energies of fast and accurate potentials \citep{immel_adaptive-precision_2025}. The \textbf{Universal Models for Atoms} architecture uses a mixture-of-linear-experts framework to scale FMs without increasing inference cost \citep{wood_uma_2026}. A router predicts expert weights from global time-invariant system properties, producing a single, fixed routing assignment during simulations. \cite{liu_scaling_2026} modified this to element-wise routing. \textbf{Static multifidelity Allegro} assigns models to fixed, predefined regions based on chemical intuition \citep{nascimento_mixture_2026}. \citet{han_smooth_2026} introduce a \textbf{dynamic cutoff} for atoms using a fixed function. Because BranchIP adapts computation within a single model, it is complementary to these architectures.

\textbf{Early exits}, introduced by~\citet{teerapittayanon_branchynet_2016}, adapt NN inference to input complexity by attaching intermediate prediction heads. Sufficiently simple inputs can terminate before the final layer, reducing average compute while retaining full depth for harder inputs.
Early exits were initially developed for convolutional NNs ~\citep{teerapittayanon_branchynet_2016, huang_multi-scale_2018} 
and later extended to transformers~\citep{xin_deebert_2020}
and graph NNs (GNNs)~\citep{di_francesco_early-exit_2026}. 
\textbf{Learned adaptive computation} has also been explored at finer levels of granularity, including 
dynamically skipping network blocks~\citep{wang_skipnet_2018, wu_blockdrop_2018}, 
selectively evaluating feature channels~\citep{gao_dynamic_2019, hua_channel_2019}, 
and varying message-passing computation across nodes in GNNs~\citep{han_turning_2024, abbahaddou_admp-gnn_2025}. 
These approaches show that computation can also be allocated dynamically across intermediate components. 
To the best of our knowledge, learned dynamic allocation of equivariant computation across model depth has not been explored. 

\section{Learning adaptive equivariant computation}

\begin{figure}[t]
  \centering
  \begin{subfigure}[t]{0.33\textwidth}
    \vspace{0pt}
    \centering
    \caption{Branchlegro schematic.}
    \label{fig:branchlegro_schematic}
    \includegraphics[width=\linewidth]{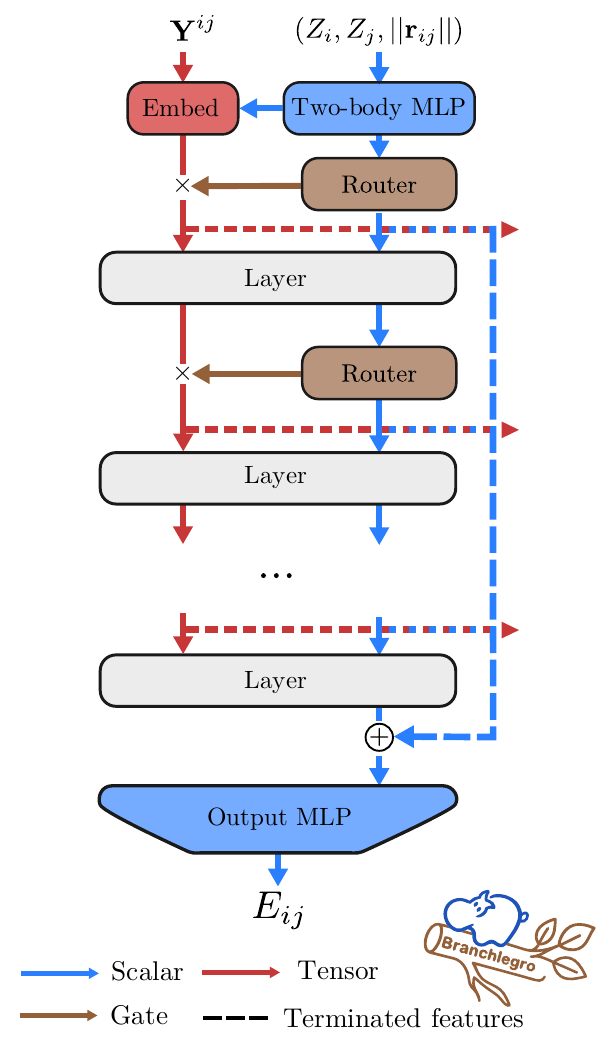}
  \end{subfigure}
  \hfill
  \begin{subfigure}[t]{0.6\textwidth}
    \vspace{0pt}
    \caption{Pseudocode of the Branchlegro inference.}
    \small
    \begin{algpanel}[\textsc{Branchlegro forward pass}]
    \label{alg:branchlegro}
    \begin{algorithmic}[1]
    \Require Edge set $\mathcal{E}$; embedding $(\mathbf{x}^{ij,0}, w^{ij,0}, \bar{\mathbf{V}}^{ij,0}, \mathbf{Y}^{ij})$.
    \State $\mathcal{E}_0 \gets \mathcal{E}$ 
    \For{$l = 1,\dots,L$}
        \If{$l-1$ has a router}
            \State $f^{ij,l-1}\gets
            \mathcal{F}\!\left(\mathrm{MLP}_{\mathrm{router}_{l-1}}(\mathbf{x}^{ij,l-1})\right),\; ij\in\mathcal{E}_{l-1}$  
            \State $\mathbf{V}^{ij,l-1}\gets
            f^{ij,l-1}\bar{\mathbf{V}}^{ij,l-1},\; ij\in\mathcal{E}_{l-1}$  
            \State $\mathcal{E}_{l}\gets
                \{ij\in\mathcal{E}_{l-1}:f^{ij,l-1}>0\}$ 
        \Else
            \State $\mathbf{V}^{ij,l-1}
            \gets
            \bar{\mathbf{V}}^{ij,l-1},\; ij\in\mathcal{E}_{l-1}$
            \State $\mathcal{E}_{l}\gets\mathcal{E}_{l-1}$
        \EndIf
        \State $\bar{\mathbf{V}}^{ij,l}  
        \gets
        \left(
        \sum_{k:(i,k)\in\mathcal{E}}
        w^{ik,l-1}\mathbf{Y}^{ik}
        \right)
        \otimes
        \mathbf{V}^{ij,l-1},\; ij\in\mathcal{E}_{l}$ 
        \State $
        \mathbf{z}^{ij,l}
        \gets
        \mathbbm{1}\{ij\in\mathcal{E}_{l}\}
        \left(\bar{\mathbf{V}}^{ij,l}\right)_{\ell=0},
        \; ij\in\mathcal{E}$ 
        \State $(\mathbf{x}^{ij,l}, w^{ij,l})
        \gets
        \mathrm{MLP}_{\mathrm{latent}_l}
        \left(
        \left[
        \mathbf{x}^{ij,0},
        \dots,
        \mathbf{x}^{ij,l-1},
        \mathbf{z}^{ij,l}
        \right]
        \right)$ 
    \EndFor
    \State \Return $E \leftarrow \sum_{ij\in\mathcal{E}}\mathrm{MLP}_{\mathrm{out}}
    \left[
    \mathbf{x}^{ij,0},
    \mathbf{x}^{ij,1},
    \dots,
    \mathbf{x}^{ij,L}
    \right]$ 
    
    \end{algorithmic}
    \end{algpanel}
  \end{subfigure}
  \caption{(a) Schematic of the Branchlegro network. Blue and red arrows represent scalar and tensor features. $Z_i$ is the atomic species of atom $i$, $\mathbf{r}_{ij}$ is the relative displacement vector from atom $i$ to $j$. For simplicity, the residual update is not presented. Continuation gates modulate the tensor features and they are routed out in inference when they have a zero-gate (dashed red arrows). Routed scalar features are concatenated back for the energy readout (dashed blue arrows). (b) Pseudocode of the Branchlegro forward pass. For edge $ij$ and layer $l$, $\mathbf{x}^{ij,l}$ are the scalar features, $w^{ij,l}$ are the environment weights, $\bar{\mathbf{V}}^{ij,l}$ are the tensor features, $\mathbf{Y}^{ij}$ are the spherical harmonic embeddings, $L$ is the number of layers, and $\mathcal{F}$ is a gating function. The initial embedding is the standard Allegro one.}
  \label{fig:branchlegro}
\end{figure}

BranchIP augments an MLIP with lightweight routers placed before selected TP layers. 
At layer $l$, for each feature $k$ (node or edge, depending on the architecture), a router produces an invariant continuation gate, $f^{k,l} \in[0,1]$, that modulates the feature's downstream contribution.
The router may be learned or heuristic-based and may use any local information, including intermediate embeddings.
At inference, zero-gated features are omitted from subsequent TPs, reducing compute and memory usage; we refer to this as sparse evaluation. 
During training, zero-gated features are retained to preserve supervision for downstream routers; we refer to this as dense evaluation.
Therefore, BranchIP allocates depth adaptively across features, rather than applying the full depth uniformly.

Because conservative MLIPs obtain forces from energy derivatives, dynamic routing must remain part of the differentiable network computation. 
BranchIP therefore multiplies each feature by its continuation gate before the next TP. The gate makes routing a continuous operation on the features rather than a discrete selection, such that a feature's contribution decays smoothly to zero as it is routed out.
To preserve the regularity of the PES, the continuation gate must be continuously differentiable to the same order as the MLIP it augments, with a flat exact-zero region. The flat region ensures that a feature reaches zero smoothly enough that its derivatives also vanish; as a result it carries no residual contribution when it is terminated. Zero-gated features can therefore be omitted from subsequent TP computation without changing the energy or its derivatives.

This design ensures that the sparse inference path reproduces the dense evaluation exactly, such that the routed model inherits the smoothness and symmetry guarantees of the architecture it augments.
\begin{Theorem}
\label{theo:main}
Let an $E(3)$-equivariant twice continuously differentiable ($C^2$) MLIP be augmented with routers such that:
(i) the continuation gate is $E(3)$-invariant, $C^2$ with a flat exact-zero region, and the same gating function is applied to every feature.; (ii) features are removed from TP computation only where the gate is exactly zero.
Then, for all atomic coordinates $\mathbf{R}$:
\begin{enumerate}[leftmargin=1cm]
  \item \emph{Regularity.} The PES is $C^2$ in $\mathbf{R}$, and the sparse computation yields its exact first and second derivatives. In particular, forces and stresses obtained from the sparse computation are exact.
  \item \emph{Symmetry.} $E(3)$-equivariance of the tensor features, invariance of the energy, and permutation symmetry are preserved.
\end{enumerate}
\end{Theorem}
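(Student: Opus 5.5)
The plan is to reduce everything to the dense evaluation, which is a composition of $C^2$, $E(3)$-equivariant maps, and then show that the sparse path computes the same function on an open neighbourhood of every configuration, so that derivatives agree too.

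\textbf{Step 1: the dense model.} In the dense model every edge is kept at every layer. The tensor features entering each TP are multiplied by $f^{ij,l}=\mathcal{F}(\mathrm{MLP}_{\mathrm{router}_l}(\mathbf{x}^{ij,l}))$. Regularity follows from three facts:
\begin{itemize}
\item $\mathbf{x}^{ij,l}$ is a $C^2$ function of $\mathbf{R}$, since the base MLIP is $C^2$.
\item The router MLP is taken with $C^2$ activations (which we assume, as for the latent MLPs).
\item $\mathcal{F}$ is $C^2$ by (i).
\end{itemize}
Products and compositions of $C^2$ maps are $C^2$, so the dense energy $E_{\mathrm{dense}}(\mathbf{R})$ is $C^2$.

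Symmetry is handled the same way. Because $\mathbf{x}^{ij,l}$ are invariant scalars, $f^{ij,l}$ is $E(3)$-invariant. Multiplying an equivariant tensor by an invariant scalar keeps it equivariant, so each layer remains equivariant. Applying the same router and $\mathcal{F}$ to every edge makes the construction commute with relabelling atoms: a permutation simply permutes edge indices, and the final sum over $\mathcal{E}$ is permutation invariant. Together with equivariance of the base architecture, this settles part 2 for the dense model.

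\textbf{Step 2: sparse equals dense pointwise.} We argue by induction on $l$ that for all $ij\in\mathcal{E}_l$ the sparse and dense features coincide, and that for $ij\notin\mathcal{E}_l$ the dense tensor features vanish. For an edge with $f^{ij,l-1}=0$, the dense $\mathbf{V}^{ij,l-1}=0$. Since the TP is bilinear, the dense $\bar{\mathbf{V}}^{ij,l}=0$, hence $\mathbf{z}^{ij,l}=0$, which matches the indicator in the sparse algorithm. Zero stays zero at later layers, because the TP of a zero vector is zero regardless of later gates. The environment sum $\sum_k w^{ik,l-1}\mathbf{Y}^{ik}$ runs over all of $\mathcal{E}$ in both paths, and $w$ and $\mathbf{x}$ are computed for every edge. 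Hence $E_{\mathrm{sparse}}(\mathbf{R})=E_{\mathrm{dense}}(\mathbf{R})$ for every $\mathbf{R}$. Part 2 then transfers directly, because it concerns function values: the sparse tensor features for edges that are still active are identical to the equivariant dense ones.

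\textbf{Step 3: derivatives, the main obstacle.} Pointwise equality alone does not yet justify that backpropagation through the sparse graph gives exact forces and Hessians. The sparse computation treats $\mathcal{E}_l$ as a fixed set and differentiates only through the active edges. Its derivatives are therefore the derivatives of a dense model in which the removed edges' gates are frozen at zero. We must show this equals $\nabla E_{\mathrm{dense}}$ and $\nabla^2 E_{\mathrm{dense}}$.

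The argument uses the flat region of $\mathcal{F}$:
\begin{itemize}
\item For a removed edge, $g=\mathrm{MLP}_{\mathrm{router}}(\mathbf{x})$ lies in the zero region of $\mathcal{F}$.
\item If that region has nonempty interior containing $g$, then by continuity of $g$ in $\mathbf{R}$ the gate is identically zero on a neighbourhood of $\mathbf{R}$, and the two functions agree locally, derivatives included.
\item If $g$ sits on the boundary of the flat region, we use $C^2$ flatness instead: $\mathcal{F}(g)=\mathcal{F}'(g)=\mathcal{F}''(g)=0$. By the chain rule, the first and second derivatives of $f\bar{\mathbf{V}}$ in $\mathbf{R}$ are sums of terms each containing a factor $\mathcal{F}$, $\mathcal{F}'$ or $\mathcal{F}''$ evaluated at $g$, so they all vanish there.
\end{itemize}
This is where (ii) is essential: removal happens only where $f=0$ exactly. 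Consequently the Taylor data to second order of the dense contribution of a removed edge is zero, matching the frozen-zero sparse graph.

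Combining the chain rule layer by layer with Step 2 gives equality of the first and second derivatives of $E_{\mathrm{sparse}}$ and $E_{\mathrm{dense}}$ at every $\mathbf{R}$. Forces $-\nabla E$ are therefore exact, and stresses are exact as well, since they are derivatives with respect to a strain acting on $\mathbf{R}$ and the same argument applies.
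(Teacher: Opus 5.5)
Your proposal is correct and follows essentially the same route as the paper. Like the paper, you use a layer-wise induction to show that dense and sparse evaluation agree pointwise (by bilinearity of the TP, with scalars and environment weights computed on all of $\mathcal{E}$), then a flat-region case analysis giving $\mathcal{F}=\mathcal{F}'=\mathcal{F}''=0$ on the interior and, by continuity, the boundary of the zero set, so omitted branches carry no first- or second-order contribution, and $C^2$ regularity is inherited from the dense composition. The one organizational difference is that you prove symmetry for the dense model and transfer it by pointwise equality, whereas the paper inducts directly on the active sets $\mathcal{E}_l$ under rotations and permutations; the two are equivalent because the zero-padded sparse features coincide with the dense ones on all of $\mathcal{E}$.
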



\emph{Proof intuition.} 
The argument rests on the equivalence between the dense and sparse evaluations. 
The TP vanishes when its gated input is zero. Thus, by (ii), removed features do not contribute to the TP and both evaluations produce the same energy.
For derivatives (e.g., forces), the sparse evaluation never builds backward computation branches for removed features. 
Thus, any nonzero derivative of the gate at a removed feature would contribute to the dense evaluation but be absent from the sparse evaluation. 
To prevent this, (i) requires the gate to be identically zero on an interval before a feature is removed, and to be $C^2$. Its derivatives therefore vanish on the interval and, by continuity, at the boundary, so each removed feature has zero value, zero slope, and zero curvature.
Removing zero-gated features leaves the energy and its derivatives unchanged, so the sparse path computes the exact gradient of the same $C^2$ energy surface as the dense path.
Finally, by the invariance and identical application in (i), the active feature set is unchanged under rotation and is permuted consistently under relabeling, so routing preserves the symmetries of the model.
The same argument holds at any order: a $C^m$ gate yields a $C^m$ surface when the underlying MLIP is $C^m$.

For learned routing, we propose lightweight MLP routers trained by distilling a frozen full-compute teacher MLIP, from which the student BranchIP model is initialized. 
In the MLPs we use a sigmoid activation function and interpret the output as continuation probabilities, $\nu^{k,l}\in(0,1)$.
As a sigmoid cannot reach exact zeros, we define the gate as $f^{k,l}=\mathcal{F}(\nu^{k,l})$, where $\mathcal{F}$ satisfies the properties above. 
To train the router MLPs, we propose the following distillation loss, inspired by binary cross-entropy. 
We define a student-teacher prediction discrepancy, $\delta^k = \left\lVert \mathbf{y}^k_{\mathrm{teacher}} - \mathbf{y}^k_{\mathrm{student}} \right\rVert_2$ on a chosen output $\mathbf{y}^k$ (e.g., energy, forces). A low $\delta^k$ indicates that further TP computation is unnecessary, whereas a high $\delta^k$ indicates that computation should continue. User-defined stop and continue tolerances, $\tau^l_{\mathrm{cont}}$ and $\tau^l_{\mathrm{stop}}$, convert $\delta^k$ into learnable router targets optimized with the distillation loss,
\begin{equation}\label{eq:loss_distill}
  \mathcal{L}_{\mathrm{distill}}
=
  -\sum_{l=1}^{L}\sum_{k\in\mathcal{K}_l}
  r^{k,l}
  \left[
  \mathbbm{1}_{\{\delta^{k}>\tau_{\mathrm{stop}}^{l}\}}
  \log \nu^{k,l}
  +
  \mathbbm{1}_{\{\delta^{k}\leq\tau_{\mathrm{cont}}^{l}\}}
  \log\!\left(1-\nu^{k,l}\right)
  \right],
  \end{equation}

where $\mathcal{K}_l$ is the set of active features in layer $l$ and $r^{k,l}= \mathrm{sg} [ \prod_{m<l}\nu^{k,m}]$ weights each term by the fraction of feature $k$'s signal that reaches router $l$.
$\mathrm{sg}$ denotes the stop-gradients operator. This prevents earlier routers from reducing downstream distillation loss by decreasing their continuation probabilities. The loss penalizes unnecessary continuation when the discrepancy falls below the continue tolerance and premature stopping when it exceeds the stop tolerance. The tolerances should obey $\tau_{\mathrm{cont}}\leq\tau_{\mathrm{stop}}$ and discrepancies between the two tolerances are left unsupervised, providing a transition region between continuation and stopping.

\textbf{Branchlegro.} 
We implement BranchIP in Allegro, denoted Branchlegro. 
In the implementation, MLPs route features that represent edges between atoms. 
We train the routers using the proposed distillation loss with force predictions (see Appendix~\ref{app:branchlegro} for formulation and gating function).
Only the TP is sparsified, with terminated edges continuing to receive scalar latent updates. The skipped TP outputs are zero-padded to the full edge set and environment weights are evaluated over all edges at every layer. 
Allegro's radial envelope vanishes to second order at the cutoff, so the architecture is $C^2$. We therefore use a $C^2$ gate, identically zero below a threshold, matching that regularity. A smoother gate is available by increasing $p$ in the continuation function.
The Branchlegro architecture schematic and pseudocode are presented in Figure~\ref{fig:branchlegro}. 
Branchlegro therefore satisfies (i) and (ii), and Theorem~\ref{theo:main} follows from Propositions~\ref{prop:energy}--\ref{prop:symmetry} and Corollary~\ref{cor:regularity}, proved in Appendix~\ref{app:theory}.

\section{Results}

\begin{figure}[tb]
    \centering
    \includegraphics[width=\textwidth]{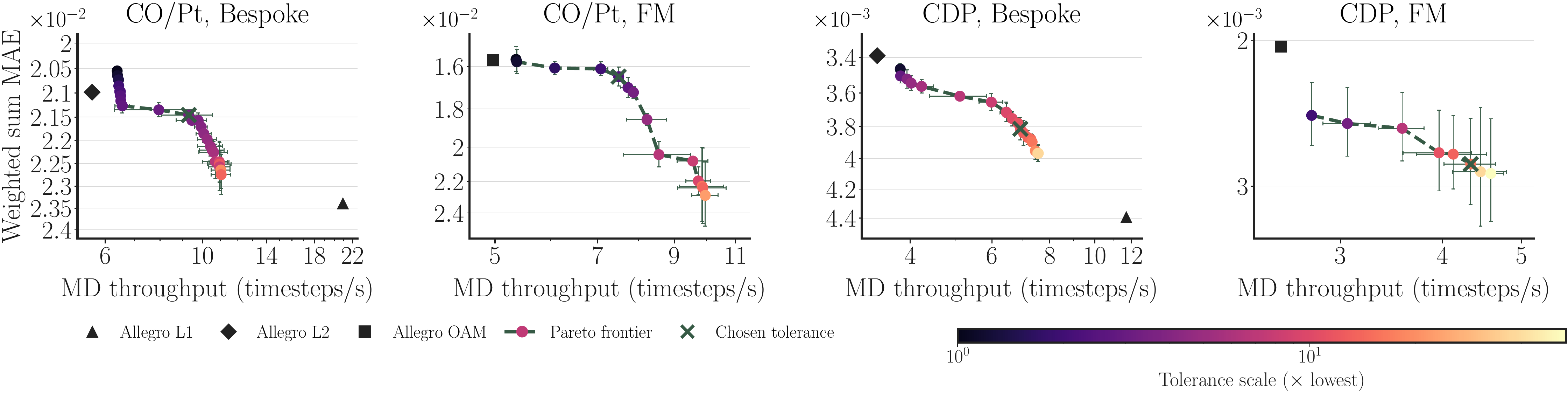}
    \caption{Accuracy--throughput trade-offs for Branchlegro on CO/Pt and CDP using bespoke models and FMs. 
    Each point reports the mean test weighted-sum MAE and MD throughput over seven seeds; error bars denote standard deviation.
    The inverted $y$-axis indicates higher accuracy (higher is better). 
    Weights for CO/Pt are 1:1:1 for per-atom energy, forces, and stress; for CDP are 10:1 for per-atom energy and forces. 
    Throughput was measured as described in Appendix~\ref{app:md_throughput} using system sizes of 4,160 and 512 atoms for CO/Pt bespoke models and FMs, and 14,336 and 768 atoms for CDP bespoke models and FMs. 
    Black markers denote Allegro baselines. Only Pareto frontier points are shown; tolerance scaling stops once no new frontier points are found.
    The $\times$ markers indicate the configurations used for the main results.
    }
    \label{fig:pareto}
\end{figure}

We trained Branchlegro models from bespoke Allegro potentials and fine-tuned Allegro-OAM-L FMs~\citep{kavanagh_fast_2026} by optimizing a joint objective comprising the proposed distillation loss and standard energy, force, and, for CO/Pt, stress supervision using seven random seeds (Appendix~\ref{app:setup}).
We evaluate each Branchlegro model against the Allegro teacher it was distilled from: two-layer (L2) models for bespoke potentials and the fine-tuned Allegro-OAM-L for the FMs. For the bespoke models, we also include one-layer (L1) Allegro as an efficiency comparison at lower accuracy, whereas no lower-compute variant is available for Allegro-OAM-L.

We examine the trade-off between predictive accuracy and MD throughput in Branchlegro by increasing the routing tolerances (Figure~\ref{fig:pareto}). 
Across all four settings, relaxing the tolerances yields substantial throughput gains with only moderate increases in error up to a model-dependent knee. Some tolerance profiles fall near transitions between routing regimes, where small differences in optimization can change which routers are favored, increasing the variance in throughput across seeds. For the CDP FM models, two of seven seeds result in worse validation and test errors across all tolerance profiles leading to the larger error variance. We select models near the corresponding validation-set knees for the system-specific scaling and physical studies that follow. These serve as representative models; in practice, Branchlegro routing tolerances can be tuned to match the accuracy and throughput requirements of a given application.

\begin{figure}[t]
    \centering
    \includegraphics[width=\textwidth]{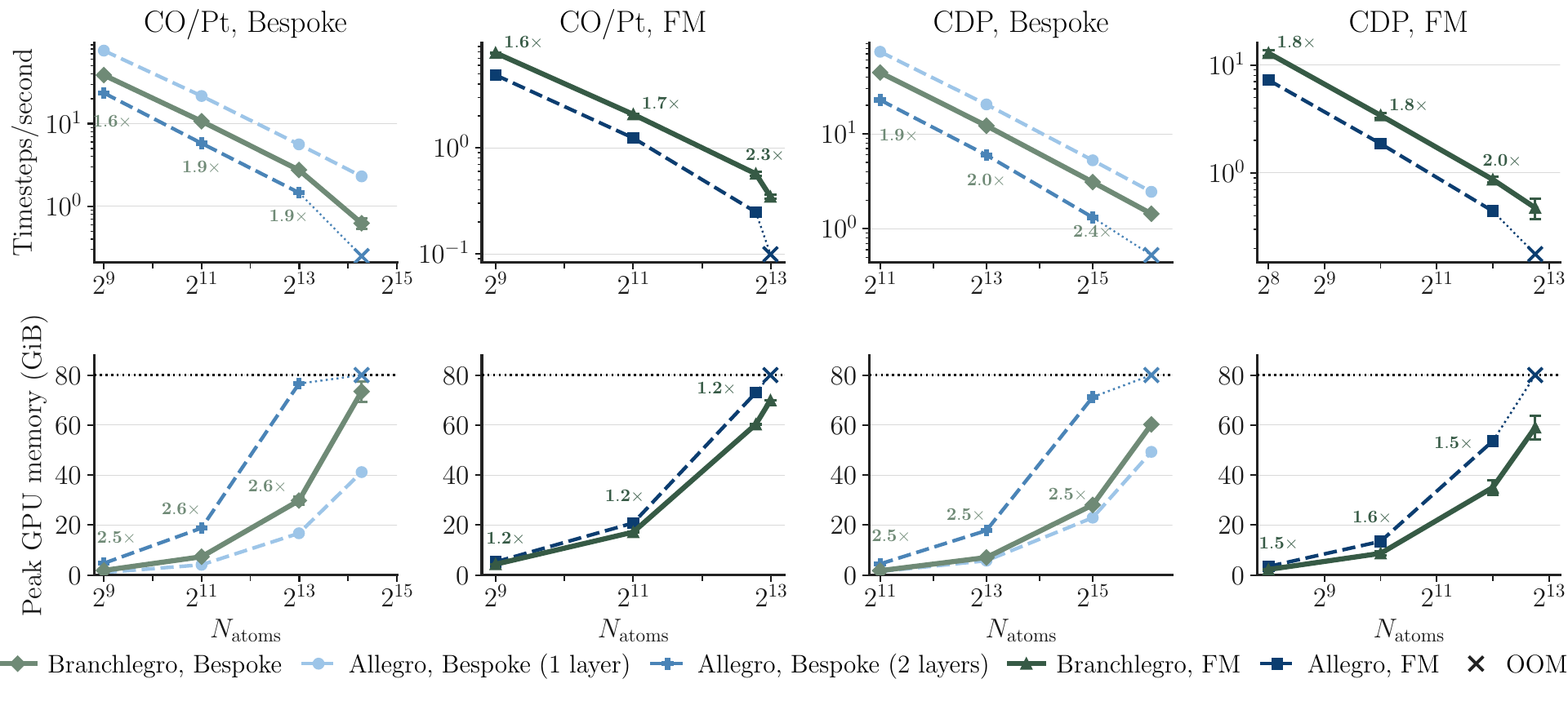}
    \caption{MD throughput and memory usage scaling of Allegro and Branchlegro on CO/Pt and CDP. 
    Branchlegro values are averaged over seven seeds, with error bars indicating the standard deviation.
    All evaluations followed the setup in Appendix~\ref{app:md_throughput}. 
    Speedups and memory reductions are relative to the corresponding full-sized Allegro model. Crosses denote out-of-memory (OOM).
    }
    \label{fig:throughput}
\end{figure}

\newcommand{\exitlayer}[1]{%
  \prescript{\scalebox{0.5}{$\hookleftarrow$}}{}{#1}%
}
\begin{table*}[b]
\centering
\caption{
Accuracy (MAE) of Allegro and Branchlegro on test datasets. Branchlegro errors are shown as mean $\pm$ standard deviation over seven seeds. Superscript $\scalebox{0.5}{$\hookleftarrow$}$ denotes a router before a layer.
}
\label{tab:accuracy_speed}

\setlength{\tabcolsep}{4pt}
\renewcommand{\arraystretch}{0.95}
\small

\begin{tabular}{@{}lccc@{\qquad}cc@{}}
\toprule
& \multicolumn{3}{c}{\textsc{CO/Pt}}
& \multicolumn{2}{c}{\textsc{CDP}} \\
\cmidrule(lr){2-4}
\cmidrule(lr){5-6}

\textsc{Architecture}$_{\{\textsc{Layers}\}}$
& $E$ {\scriptsize [meV/atom]}
& $F$ {\scriptsize [meV/\AA]}
& $S$ {\scriptsize [meV/\AA$^3$]}
& $E$ {\scriptsize [meV/atom]}
& $F$ {\scriptsize [meV/\AA]} \\
\midrule

\textsc{Allegro}$_{\boldsymbol{\{0\}}}$
&
$\multhousand{0.00250175547}$
&
$\multhousand{0.06735978317}$
&
$\multhousand{0.0002975983867}$
&
$\multhousand{0.000805370123}$
&
$\multhousand{0.04026606266}$
\\

\textsc{Allegro}$_{\boldsymbol{\{0,1\}}}$
&
$\multhousand{0.00253073388}$
&
$\multhousand{0.06015199646}$
&
$\multhousand{0.0002666349784}$
&
$\multhousand{0.0005795040441}$
&
$\multhousand{0.03150236241}$
\\

\textsc{Branchlegro}$_{
\boldsymbol{\{\exitlayer{0},\exitlayer{1}\}}
}$
&
$\multhousand{0.002159170346}$
${\scriptstyle \pm \multhousand{0.0001042370563}}$
&
$\multhousand{0.06193266317}$
${\scriptstyle \pm \multhousand{0.0002161352951}}$
&
$\multhousand{0.0002765569909}$
${\scriptstyle \pm \multhousand{5.536132909e-06}}$
&
$\multhousand{0.0006284514387}$
${\scriptstyle \pm \multhousand{2.604456412e-05}}$
&
$\multhousand{0.03565826833}$
${\scriptstyle \pm \multhousand{0.0006819899932}}$
\\

\cmidrule(lr){1-6}

\textsc{Allegro}$_{\boldsymbol{\{0,1,2,3\}}}$
&
$\multhousand{0.001557818123}$
&
$\multhousand{0.04535222692}$
&
$\multhousand{0.0002335597348}$
&
$\multhousand{0.0003391213118}$
&
$\multhousand{0.01899387076}$
\\

\textsc{Branchlegro}$_{
\boldsymbol{\{
\exitlayer{0},\exitlayer{1},\exitlayer{2},\exitlayer{3}
\}}
}$
&
$\multhousand{0.001766895348}$
${\scriptstyle \pm \multhousand{0.0002981581678}}$
&
$\multhousand{0.04734586662}$
${\scriptstyle \pm \multhousand{0.00121551682}}$
&
$\multhousand{0.0002658904577}$
${\scriptstyle \pm \multhousand{7.910900502e-06}}$
&
$\multhousand{0.00044146041}$
${\scriptstyle \pm \multhousand{5.709050628e-05}}$
&
$\multhousand{0.02661726253}$
${\scriptstyle \pm \multhousand{0.003172767509}}$
\\

\bottomrule
\end{tabular}
\end{table*}

\textbf{Speedup and memory.}
Figure~\ref{fig:throughput} shows the MD throughput speedup and memory usage of Branchlegro over Allegro (Appendix~\ref{app:md_throughput}). 
In our experiments, Branchlegro models achieve $1.6$--$2.4 \times$ speedup across model sizes and systems. 
Notably, accelerations emerge even at small scale: the ${\sim}100$K-parameter bespoke Branchlegro CDP model is $1.9\times$ faster with 2,048~atoms. 
Branchlegro also reduces memory requirements, enabling inference regimes inaccessible to Allegro by reducing memory usage by $1.2$--$2.6 \times$. 
The FMs may achieve smaller memory reductions due to their significantly larger hidden dimension and latent MLPs.
The bespoke models achieve significant memory reductions even in the regime of smaller speedups (e.g., $2.5\times$ memory reduction for CO/Pt bespoke on 512~atoms, with only a $1.6\times$ speedup).
The speedup and memory reductions reported reflect the selected models and systems and not an inherent upper limit of Branchlegro performance.
For example, by choosing a CO/Pt FM Branchlegro model with a larger tolerance scale, the speedup increases to $3.2\times$ and memory reduction to $1.8\times$, at the cost of further accuracy degradation (Figure~\ref{fig:pareto}).  

\textbf{Accuracy.} We evaluate per-atom energy, force, and stress mean absolute errors (MAEs) (Table~\ref{tab:accuracy_speed}), alongside physical fidelity to ground-truth calculations (Appendix~\ref{app:physical_fidelity}). Branchlegro maintains comparable MAEs 
while retaining energy profiles and structural and dynamical observables (Appendix~\ref{app:physical_fidelity}).
Notably, bespoke Branchlegro models achieve this accuracy despite most edges stopping before the first TP layer.
For CO/Pt, the L1 Allegro model exhibits a lower degradation in errors relative to the L2 model than in CDP. This suggests that CO/Pt requires fewer interaction layers, consistent with the lower errors achieved by Branchlegro. Branchlegro models obey energy conservation in MD (Appendix~\ref{app:energy_conservation}). We further demonstrate the practicality of Branchlegro models in realistic MLIP applications, large length- and time-scale MD. The CDP MLIP provides explicit simulation of long-range proton transport from which observables such as proton-diffusion activation energy can be determined. Branchlegro yields a proton-diffusion activation energy of $E_a=0.40$~eV, within the experimental range of $0.39$--$0.43$~eV~\citep{haile_solid_2007, ishikawa_proton_2008} and close to its L2 teacher, whereas L1 Allegro is outside the experimental range with $E_a=0.46$~eV.

\begin{figure}[p]
  \centering
  \includegraphics[width=1.0\textwidth]{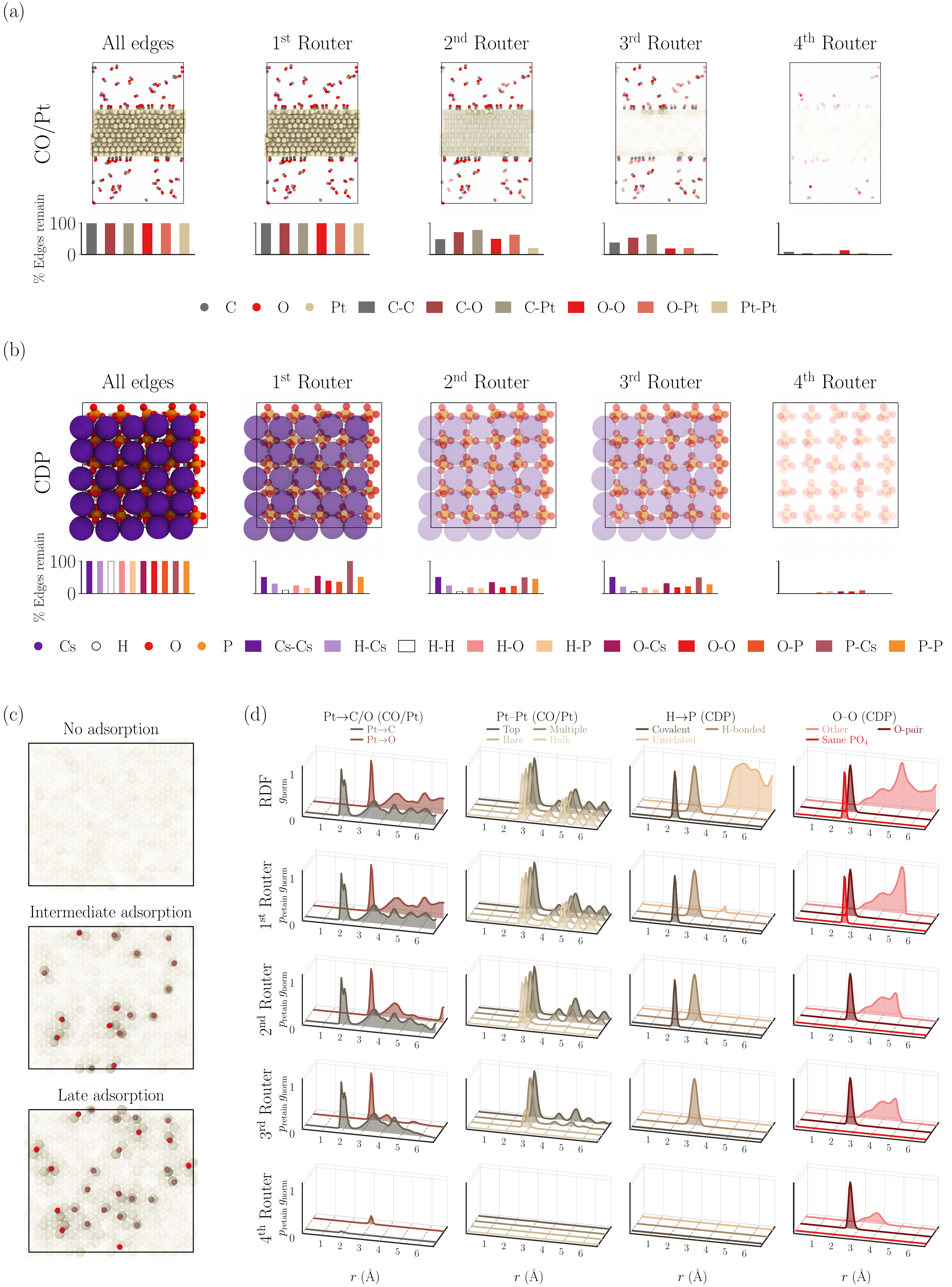}
  \caption{Adaptive routing in CO/Pt and CDP Branchlegro FMs. Atom opacity represents the cumulative gating of its connected edges retained in the TP operations after each router.
  (a) and (b) CO/Pt and CDP structures before routing and after Routers 1–4. For clarity, the CO/Pt visualization retains only the two closest planes of Pt atoms. 
  (c) Top views of CO/Pt at Router 3 for configurations with varying CO adsorption. Only the upper two Pt layers and CO molecules within 2.2\,\text{\AA} of the uppermost Pt layer are shown.
  (d) Unweighted normalized radial distribution functions (RDFs), $g_\text{norm}$, and retention-weighted RDFs, $p_{\mathrm{retain}}g_\text{norm}$, after Routers 1–4 for Pt--C/O and Pt--Pt interactions in CO/Pt and H--P and O--O in CDP. RDFs are normalized once using the initial peak value. 
  }
  \label{fig:routing_vis}
\end{figure}

\subsection{Interpretability}

The learned routing provides model interpretability by showing which atomic environments require deeper equivariant computation. We analyze the routing patterns of the two systems and focus on the four routers in the Branchlegro FMs, as they provide a richer analysis than the two routers of the bespoke models.

\textbf{CO/Pt.} 
The system consists of a Pt slab interacting with CO gas molecules, where carbon atoms adsorb to the surface.
The system contains a relatively regular Pt bulk together with a chemically heterogeneous and complex surface. 
On the surface, Pt--C adsorption, C--O bonding, lateral CO interactions, and local surface Pt geometry are strongly coupled~\citep{owen_surface_2024}.
The routers retain mostly edges involving carbon and oxygen and a small subset of the Pt--Pt edges (Figure~\ref{fig:routing_vis}a). 
Notably, this routing tracks the chemistry of the reactive surface: the fourth router retains adsorbed CO while routing out most gas-phase CO, and the second and third routers preferentially retain edges of Pt atoms with adsorbed CO relative to those without (Figure~\ref{fig:routing_vis}c). We analyze these patterns as a function of edge distance in Figure~\ref{fig:routing_vis}d for Pt--C/O edges (centered on Pt) and for Pt--Pt edges.

Pt--C and Pt--O edges are retained in the final layer at around $2$\,\AA\ and $3$\,\AA, the expected distances of adsorbed C~\citep{ogletree_leed_1986} and its O atom. In contrast, Pt edges with gaseous CO are terminated by the final router. For Pt--Pt edges, we classify the central Pt atom as bulk, bare surface, or adsorbate-bound in a linear single coordination (top) or multiple coordination (multi) configuration (see Appendix~\ref{app:interpretability}).
Strikingly, the second router differentiates bulk, bare surface, and adsorbate-bound Pt atoms, while the third router further separates the two adsorbate configurations.
Since Pt--Pt separations are largely unchanged by adsorption, this pattern indicates that the routers identify these environments as more complex, requiring higher body-order information, and allocate additional TP refinement to them accordingly.

\textbf{CDP.} 
The system is composed of a relatively localized Cs and P framework, orientationally disordered PO$_4$ tetrahedra, and highly mobile protons (H).
The framework edges (Cs--Cs, Cs--P, and P--P) are mainly retained by the earlier routers, but are entirely terminated by the final router (Figure~\ref{fig:routing_vis}b). 
Conversely, a substantial portion of edges involving H or O atoms are terminated in early routers, while a small subset is retained through the final router. This suggests these interactions require deeper many-body computation.
We inspect the routing patterns over edge distance in Figure~\ref{fig:routing_vis}d for H--P edges (centered on H) and for O--O edges.

For H--P, the edges are classified by the proton's instantaneous bonding relationship to the PO$_4$ group: covalent, when the proton is covalently bonded to one of its oxygens; hydrogen-bonded, when the proton forms a hydrogen bond to one of the oxygens in the same PO$_4$ group (H$\cdots$O--P); and unrelated, when neither holds. The classes are well separated in distance, with the covalent and hydrogen-bonded classes peaking near $2.3$ and $3.0$\,\AA\ and the unrelated class beginning beyond $4$\,\AA. As the proton rattles on sub-picosecond timescales between the two oxygens of a hydrogen bond~\citep{wang_revealing_2025}, which belong to different PO$_4$ groups, a given phosphate pair exchanges the two labels repeatedly along a trajectory.
In the Branchlegro model, the first router terminates nearly all unrelated H--P edges, the third terminates the covalent edges, and the fourth terminates the hydrogen-bonded edges. The early removal of unrelated edges indicates that distant phosphate environments contribute little once the local proton environment has been established. The two bonded classes differ in how much of the H--P geometry the covalent bonding already fixes. For the covalent class, the near-rigid P--O--H geometry fixes it~\citep{wang_revealing_2025}, accounting for the first peak. The H--P edge therefore adds little beyond the covalent H--O edge already present in the same proton-centered environment. For the hydrogen-bonded class, no covalent bond constrains the H--P geometry. The geometry instead depends on how two reorienting PO$_4$ groups are arranged relative to one another, and thus responds to the rotation of either. These edges therefore remain for deeper many-body interactions.

For O--O, the edges are classified by the bonding relationship between the two oxygens: same-PO$_4$, for an O--O pair within one group; O-pair, for an O--H$\cdots$O hydrogen bond between two groups that share a proton~\citep{wang_revealing_2025}; and other, for the remaining pairs in different groups. 
The same-PO$_4$ edges terminate at the second router, while the O-pair edges are retained at full amplitude through all four routers. Because the two classes nearly coincide in distance, this separation cannot come from radial distance alone. 
At two-body order (Router 1), the router sees only species and distance, and accordingly retains both classes. The first TP adds, at each oxygen, the angle between the edge and that oxygen's own P--O bond, which identifies the pair as intramolecular. The edge is then redundant, since the P--O edges already encode the tetrahedron's orientation, and the second router terminates the edge. 
In contrast, an O-pair edge depends on the position of the proton between the two oxygens. That position is continually modified by rattling within the O-pair and by O-sharing, in which two protons from different O-pairs simultaneously share a single oxygen~\citep{wang_revealing_2025}. Resolving this O-pair edge requires a larger neighborhood than in the intramolecular case, and therefore features of higher many-body order, consistent with the model retaining these edges through the final router.

\section{Conclusion}
We introduce BranchIP, a learned, dynamic adaptive inference framework for equivariant MLIPs that obeys energy conservation. The framework leverages the insight that features representing certain atomic environments have sufficient TP refinement and can avoid additional computation in later layers. Through our development of a novel distillation loss, learned routers provide model interpretability and accelerate inference without relying on chemical intuition. Our implementation, Branchlegro, in the MLIP Allegro achieves significant inference speedups and memory efficiency on complex reactive systems: heterogeneous CO/Pt and CDP, a system with no clear spatial partition. Our results show that adaptive computation creates faster MLIPs without hand-crafted routing, exorbitant parameter counts, or loss of physical fidelity.

\textbf{Limitations.}
The Branchlegro implementation only reduces the TP computations, while the rest of the computations done in the layer are retained as in a standard Allegro model. This limits the speedup potential of Branchlegro, especially when there are other expensive operations (e.g., large edge-wise MLPs). In addition, the learned routing scheme with the novel distillation loss requires a pretrained teacher model, increasing the cost of the overall training. The experiments included in this work focus on two systems.

\section{Acknowledgments and Disclosure of Funding}
We thank Anders Johansson and Itay Lavie for helpful discussions on LAMMPS usage and the manuscript preparation. 
A portion of the computations for this work were performed on the FASRC Cannon cluster supported by the FAS Division of Science Research Computing Group at Harvard University.
An award of computer time was provided by the INCITE program. This research used resources of both the Argonne and Oak Ridge Leadership Computing Facilities, which are DOE Office of Science User Facilities supported under contracts DE-AC02-06CH11357 and DE-AC05-00OR22725.
L.Z. was supported by the National Science Foundation Graduate Research Fellowship under Grant No. DGE-2140743
and Grant No. DGE-2630152. 
A.Z.~is supported by the U.S.~Department of Energy, Office of Science, Office of Advanced Scientific Computing Research, Department of Energy Computational Science Graduate Fellowship under Award Number DE-SC0025528.
This work was supported by the National Science Foundation, Office of Advanced Cyberinfrastructure (OAC), under Award No. 2118201 and the Harvard University Materials Research Science and Engineering Center Grant No. DMR-2011754.

\bibliographystyle{iclr2027_conference}
\bibliography{references}

\appendix

\section{Branchlegro details} \label{app:branchlegro}

\subsection{Force Distillation Loss Formulation}
We detail the exact formulation for the force distillation loss component used to train our Branchlegro models. The features in Allegro are edge-based, so $\mathcal{K}=\mathcal{E}$. For the force discrepancy, we use the $\ell_2$ norm of the force difference between the teacher and student predictions:
\begin{equation}
    \delta^F_i = \left\lVert \mathbf{F}_i^{\mathrm{teacher}} - \mathbf{F}_i^{\mathrm{student}} \right\rVert_2,
\end{equation}
where $\mathbf{F}_i^{\mathrm{teacher/student}}$ is the force prediction of the teacher/student on atom $i$.
As forces are node-resolved but the routers predict edge-wise probabilities, we convert the atomic force discrepancies to edge discrepancies:
\begin{equation}
    \delta^{F}_{ij} = \frac{\delta_i^F+\delta_j^F}{2}.
\end{equation}

Thus, for Branchlegro's edge features, Equation~\ref{eq:loss_distill} becomes:
\begin{equation}
  \mathcal{L}_{\mathrm{distill}}
=
  -\sum_{l=1}^{L}\sum_{ij\in\mathcal{E}_l}
  r^{ij,l}
  \left[
  \mathbbm{1}_{\{\delta^{ij}>\tau_{\mathrm{stop}}^{l}\}}
  \log \nu^{ij,l}
  +
  \mathbbm{1}_{\{\delta^{ij}\leq\tau_{\mathrm{cont}}^{l}\}}
  \log\!\left(1-\nu^{ij,l}\right)
  \right],
  \end{equation}
where 
$\nu^{ij,l} = \mathrm{MLP}_{\mathrm{router}_{l}}(\mathbf{x}^{ij,l})$ and
$r^{ij,l}= \mathrm{sg} [ \prod_{m<l}\nu^{ij,m}]$. 

\subsection{Continuation-gating function}
For the smooth continuation-gating function, we use 
\begin{equation}
   \mathcal{F}(x) = 1-\exp\!\left[
  -\left(
  \left(\frac{x-T}{w}\right)_+
  \right)^p
  \right]
\end{equation}
where $(a)_+ \coloneqq \max\{a,0\}$. For $p\in\mathbb{N}$ and $w>0$, this construction ensures that $\mathcal{F}(x)\in[0,1)$, $\mathcal{F}\in C^{p-1}$, and $\mathcal{F}(x)=0$ for all $x \leq T$. Thus, we choose $p=3$ to ensure the resulting PES is $C^2$. 
In our implementation, we use $T=0.3$ and $w=0.1$ for all models. 
These parameters provide coverage of the nearly ungated ($\mathcal{F}(x)\approx1$) and fully gated ($\mathcal{F}(x)=0$) regimes, with a sharp but smooth transition controlled by $w$.
Empirically, the choice of $T$ has little effect on the resulting routing patterns, suggesting that the router learns to compensate for changes in this threshold. Figure~\ref{fig:app-gating_func} illustrates the gating function and its dependence on these parameters.

\begin{figure}[!h]
    \centering
    \includegraphics[width=\textwidth]{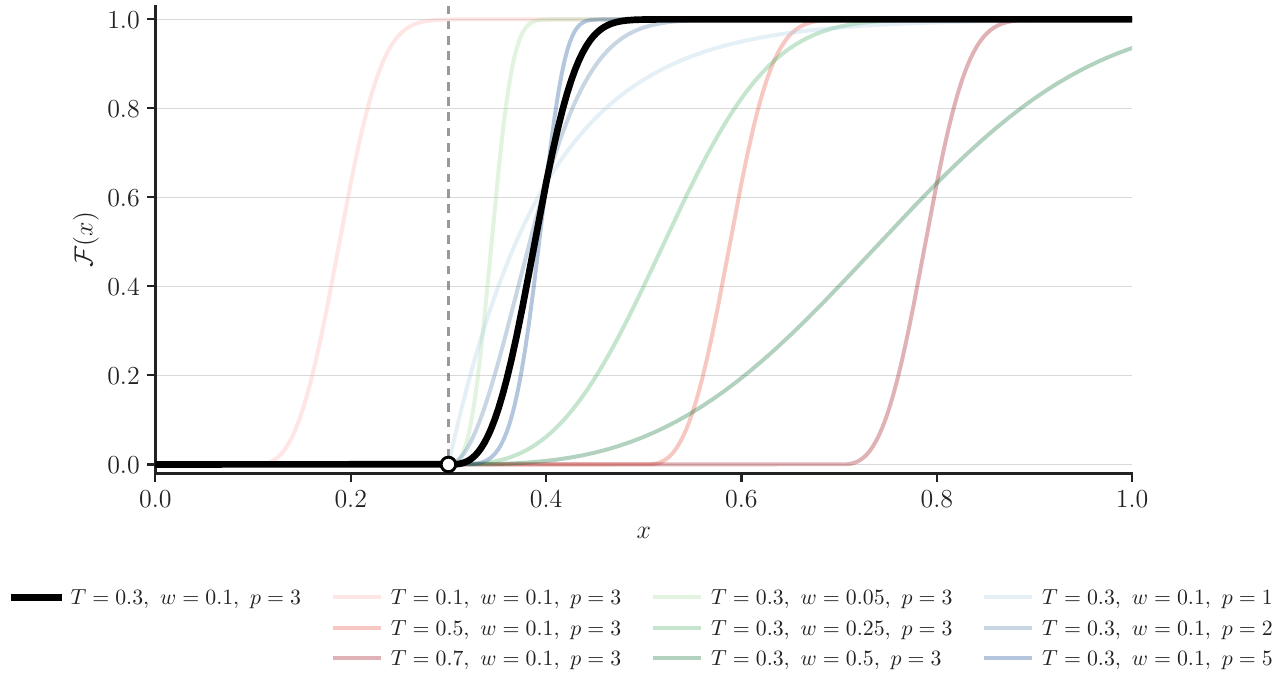}
    \caption{The continuation gate function used in Branchlegro. The black curve denotes the parameter setting used in this work.}
    \label{fig:app-gating_func}
\end{figure}

We note the continuation-gating function is similar to the polynomial envelope function used in the \texttt{nequip}-framework to smoothly transition edge activations to zero at the radial cutoff of the model.

\section{Theoretical guarantees} \label{app:theory}

Equivariant and energy-conserving MLIPs are constructed from operations that preserve their underlying symmetries and conservation laws.
For conservative MLIPs, forces are obtained as derivatives of the predicted energy, so dynamic routing must not introduce discontinuities or remove nonzero derivative contributions. 
Likewise, operations acting on equivariant features must respect their transformation laws to preserve model equivariance.

BranchIP introduces dynamic, feature-level routing into an equivariant MLIP, 
altering which features participate in TP computation during inference. 
For this adaptive computation to preserve the model's physical properties, sparse evaluation must reproduce the corresponding dense gated energy and its derivatives, while preserving the model's equivariance and permutation symmetries.
We establish these guarantees for Branchlegro.

Let $\mathbf{R}$ denote the atomic coordinates and let $\mathcal{E}$ be the full directed edge set constructed from the neighbor graph. At layer $l$, each edge $ij\in\mathcal{E}$ has scalar features $\mathbf{x}^{ij,l}$, environment weights $w^{ij,l}$, and equivariant tensor features $\bar{\mathbf{V}}^{ij,l}$, consistent with the Allegro forward pass.
For clarity, we suppress irreducible representations and channel indices as well as environment tensor normalization, which do not affect the routing arguments below.

Define the environment tensor entering layer $l+1$ as
\begin{equation}
    \mathbf{A}^{i,l}
    =
    \sum_{k:(i,k)\in\mathcal{E}}
    w^{ik,l}\mathbf{Y}^{ik},
\end{equation}
where $\mathbf{Y}^{ik}$ denotes the spherical harmonic embedding.
The TP update is then
\begin{equation}
    \bar{\mathbf{V}}^{ij,l+1}
    =
    \mathbf{A}^{i,l}
    \otimes
    \bar{\mathbf{V}}^{ij,l}.
\end{equation}

The scalar component of the TP output is
\begin{equation}
    \mathbf{z}^{ij,l+1}
    =
    \left(
    \bar{\mathbf{V}}^{ij,l+1}
    \right)_{\ell=0},
\end{equation}
where $\ell$ denotes angular momentum.
The latent update is then
\begin{equation}
\label{eq:latent}
    \left(
    \mathbf{x}^{ij,l+1},w^{ij,l+1}
    \right)
    =
    \mathrm{MLP}_{\mathrm{latent}_{l+1}}
    \left(
    \left[
    \mathbf{x}^{ij,0},
    \ldots,
    \mathbf{x}^{ij,l},
    \mathbf{z}^{ij,l+1}
    \right]
    \right),
    \qquad ij\in\mathcal{E}.
\end{equation}
The final energy is computed from the concatenated scalar latent features,
\begin{equation} \label{eq:energy_readout}
    E(\mathbf R)
    =
    \sum_{ij\in\mathcal E}
    \mathrm{MLP}_{\mathrm{out}}
    \left(
    \left[
    \mathbf{x}^{ij,0},
    \ldots,
    \mathbf{x}^{ij,L}
    \right]
    \right).
\end{equation}

With this notation, we show that Branchlegro (i) is algebraically equivalent to dense gated evaluation, (ii) preserves energy gradients and Hessians, and therefore forces, and (iii) preserves the equivariance and permutation symmetries of Allegro.

\subsection{Energy equivalence}

For a layer equipped with a router, define the continuation gate
\begin{equation}
    f^{ij,l} = \mathcal{F}(\nu^{ij,l}),
    \qquad
    \nu^{ij,l}
    =
    \mathrm{MLP}_{\mathrm{router}_l}
    \left(\mathbf{x}^{ij,l}\right),
\end{equation}
with $f^{ij,l}\in[0,1]$. For layers without a router, we define
$f^{ij,l}=1$. The gated tensor feature is
\begin{equation}
    \mathbf{V}^{ij,l}
    =
    f^{ij,l}\bar{\mathbf{V}}^{ij,l},
\end{equation}
and the gated TP is
\begin{equation} \label{eq:tp}
    \bar{\mathbf{V}}^{ij,l+1}
    =
    \mathbf{A}^{i,l}
    \otimes
    \mathbf{V}^{ij,l},
    \qquad ij\in\mathcal{E}.
\end{equation}

For energy equivalence, we require the following:
\begin{enumerate}[label=\textbf{A\arabic*.}]
    \item \label{ass:one}
    \textbf{Exact-zero slicing.}
    An edge is removed from TP computation only when its continuation
    gate is exactly zero.

    \item \label{ass:two}
    \textbf{Dense scalar computation.}
    Scalar features $\mathbf{x}^{ij,l}$ and environment weights
    $w^{ij,l}$ remain evaluated for all edges. Only TP computation is sparsified.
\end{enumerate}

Let $E_{\mathrm{dense}}(\mathbf{R})$ denote evaluation in which all edges
remain in the TP computation at every layer, including edges with
$f^{ij,l}=0$. Thus,
\begin{equation}
    \bar{\mathbf{V}}_{\mathrm{dense}}^{ij,l+1}
    =
    \mathbf{A}_{\mathrm{dense}}^{i,l}
    \otimes
    \mathbf{V}_{\mathrm{dense}}^{ij,l},
    \qquad ij\in\mathcal{E}.
\end{equation}

Let $E_{\mathrm{sparse}}(\mathbf{R})$ denote the corresponding sparse
evaluation. Define the active edge sets recursively as
\begin{equation}
    \mathcal{E}_0=\mathcal{E},
    \qquad
    \mathcal{E}_{l+1}
    =
    \left\{
    ij\in\mathcal{E}_l:
    f^{ij,l}>0
    \right\}.
\end{equation}
Sparse evaluation computes
\begin{equation}
    \bar{\mathbf{V}}_{\mathrm{sparse}}^{ij,l+1}
    =
    \mathbf{A}_{\mathrm{sparse}}^{i,l}
    \otimes
    \mathbf{V}_{\mathrm{sparse}}^{ij,l},
    \qquad
    ij\in\mathcal{E}_{l+1},
\end{equation}
while the TP is not evaluated for
$ij\notin\mathcal{E}_{l+1}$.

Because subsequent scalar computation remains defined over the full edge
set, the TP output is zero-padded as
\begin{equation}
\label{eq:scalars}
    \mathbf{z}^{ij,l+1}_{\mathrm{sparse}}
    =
    \begin{cases}
        \left(
        \bar{\mathbf{V}}_{\mathrm{sparse}}^{ij,l+1}
        \right)_{\ell=0},
        & ij\in\mathcal{E}_{l+1},
        \\[3pt]
        \mathbf{0},
        & ij\notin\mathcal{E}_{l+1}.
    \end{cases}
\end{equation}

For dense evaluation,
$\mathbf{z}_{\mathrm{dense}}^{ij,l+1}
=(\bar{\mathbf{V}}_{\mathrm{dense}}^{ij,l+1})_{\ell=0}$.

By Assumption~\ref{ass:two}, Equation~\ref{eq:latent} is evaluated for
all $ij\in\mathcal{E}$ in both the dense and sparse evaluations.

\begin{Proposition} \label{prop:energy}
Under Assumptions~\ref{ass:one} and~\ref{ass:two}, dense gated evaluation
and sparse active-set evaluation produce identical scalar latent features
at every layer and therefore identical energies:
\begin{equation}
    E_{\mathrm{dense}}(\mathbf{R})
    =
    E_{\mathrm{sparse}}(\mathbf{R}),
    \qquad
    \forall\,\mathbf{R}.
\end{equation}
\end{Proposition}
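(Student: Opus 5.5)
We argue by induction on the layer index $l$. The inductive hypothesis $H_l$ has three parts:
\begin{itemize}
\item[(a)] $\mathbf{x}^{ij,m}_{\mathrm{dense}}=\mathbf{x}^{ij,m}_{\mathrm{sparse}}$ and $w^{ij,m}_{\mathrm{dense}}=w^{ij,m}_{\mathrm{sparse}}$ for all $ij\in\mathcal{E}$ and all $m\le l$;
\item[(b)] $\bar{\mathbf{V}}^{ij,l}_{\mathrm{dense}}=\bar{\mathbf{V}}^{ij,l}_{\mathrm{sparse}}$ for all $ij\in\mathcal{E}_l$;
\item[(c)] $\bar{\mathbf{V}}^{ij,l}_{\mathrm{dense}}=\mathbf{0}$ for all $ij\notin\mathcal{E}_l$.
\end{itemize}
The base case $l=0$ holds because both evaluations start from the same Allegro embedding and $\mathcal{E}_0=\mathcal{E}$, which makes (c) vacuous. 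Once $H_L$ is established, part (a) at $m=0,\dots,L$ and the readout in Equation~\ref{eq:energy_readout} give $E_{\mathrm{dense}}=E_{\mathrm{sparse}}$ immediately.

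For the inductive step, first use (a). Because the routers and the gating function $\mathcal{F}$ are deterministic functions of $\mathbf{x}^{ij,l}$, the gates $f^{ij,l}$ agree on every edge where they are evaluated. The environment tensors $\mathbf{A}^{i,l}$ are built from $w^{ik,l}$ over the full edge set in both evaluations, by Assumption~\ref{ass:two}, so they also agree. Next, combining (b) with equal gates gives $\mathbf{V}^{ij,l}_{\mathrm{dense}}=\mathbf{V}^{ij,l}_{\mathrm{sparse}}$ on $\mathcal{E}_l$. For $ij\in\mathcal{E}_{l+1}\subseteq\mathcal{E}_l$, Equation~\ref{eq:tp} then yields identical $\bar{\mathbf{V}}^{ij,l+1}$, which is (b) at level $l+1$.

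For $ij\notin\mathcal{E}_{l+1}$ there are two cases.
\begin{itemize}
\item If $ij\notin\mathcal{E}_l$, then (c) gives $\bar{\mathbf{V}}^{ij,l}_{\mathrm{dense}}=\mathbf{0}$, so $\mathbf{V}^{ij,l}_{\mathrm{dense}}=\mathbf{0}$.
\item If $ij\in\mathcal{E}_l$ but $f^{ij,l}=0$, which is the only way an active edge is removed by Assumption~\ref{ass:one}, then again $\mathbf{V}^{ij,l}_{\mathrm{dense}}=f^{ij,l}\bar{\mathbf{V}}^{ij,l}_{\mathrm{dense}}=\mathbf{0}$.
\end{itemize}
In both cases the tensor product is bilinear, so $\mathbf{A}^{i,l}\otimes\mathbf{0}=\mathbf{0}$. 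This gives (c) at level $l+1$. It also shows that the dense $\ell=0$ output $\mathbf{z}^{ij,l+1}_{\mathrm{dense}}$ is $\mathbf{0}$ and therefore equals the zero padding of Equation~\ref{eq:scalars}. On active edges the $\mathbf{z}$ values agree by (b). So $\mathbf{z}^{ij,l+1}$ agrees on all of $\mathcal{E}$. Since Equation~\ref{eq:latent} is evaluated on all edges with identical inputs, (a) extends to $m=l+1$.

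The main subtlety is part (c): it must be carried through the induction. If an edge is removed at an early router, the dense evaluation keeps computing tensor products for it at every later layer, and we need those outputs to stay exactly zero. The multiplicative gate alone only guarantees this at the layer where the gate vanishes. Propagating the zero therefore relies on linearity of the tensor product in the edge feature, together with the fact that later dense gates only rescale an already-zero tensor. A further point to check is that sparse evaluation applies routers only on $\mathcal{E}_l$. The dense gate values on inactive edges are irrelevant, since they multiply zero tensors; this should be stated explicitly.
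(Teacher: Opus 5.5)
Your proposal is correct and follows essentially the same induction as the paper. It uses the same three-part hypothesis, the same case split on whether an edge exits at layer $l$ or earlier, bilinearity of the tensor product, agreement with the zero-padding, and finally the common readout. Stating (a) for all $m\le l$ makes explicit something the paper's hypothesis uses only implicitly, since the latent MLP consumes the full scalar history $[\mathbf{x}^{ij,0},\dots,\mathbf{x}^{ij,l},\mathbf{z}^{ij,l+1}]$.
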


\begin{proof}
We proceed by induction over the layers. At layer $l$, assume that
\begin{equation}
    \mathbf{x}_{\mathrm{dense}}^{ij,l}
    =
    \mathbf{x}_{\mathrm{sparse}}^{ij,l},
    \qquad
    w_{\mathrm{dense}}^{ij,l}
    =
    w_{\mathrm{sparse}}^{ij,l},
    \qquad
    \forall\,ij\in\mathcal{E},
\end{equation}
and that
\begin{equation}
    \bar{\mathbf{V}}_{\mathrm{dense}}^{ij,l}
    =
    \bar{\mathbf{V}}_{\mathrm{sparse}}^{ij,l},
    \qquad
    \forall\,ij\in\mathcal{E}_l,
\end{equation}
while
\begin{equation}
    \bar{\mathbf{V}}_{\mathrm{dense}}^{ij,l}
    =
    \mathbf{0},
    \qquad
    \forall\,ij\notin\mathcal{E}_l.
\end{equation}
These conditions hold initially because
$\mathcal{E}_0=\mathcal{E}$ and both evaluations begin from the same
embedding.

Since the scalar features are identical, both evaluations produce the
same continuation gates for all active edges. Moreover, by
Assumption~\ref{ass:two}, the environment weights $w^{ij,l}$ remain
evaluated for every edge in both evaluations. Since these weights are
identical by the induction hypothesis and the environment tensor is
constructed over the full edge set,
\begin{equation}
    \mathbf{A}_{\mathrm{dense}}^{i,l}
    =
    \sum_{k:(i,k)\in\mathcal E}
    w_{\mathrm{dense}}^{ik,l}\mathbf Y^{ik}
    =
    \sum_{k:(i,k)\in\mathcal E}
    w_{\mathrm{sparse}}^{ik,l}\mathbf Y^{ik}
    =
    \mathbf{A}_{\mathrm{sparse}}^{i,l}.
\end{equation}

For an edge $ij\in\mathcal{E}_{l+1}$, both evaluations therefore apply
the same gate to the same tensor feature and evaluate the same TP, giving by Equation~\ref{eq:tp}
\begin{equation}
    \bar{\mathbf{V}}_{\mathrm{dense}}^{ij,l+1}
    =
    \bar{\mathbf{V}}_{\mathrm{sparse}}^{ij,l+1}.
\end{equation}

Now consider $ij\notin\mathcal{E}_{l+1}$. If the edge exits at layer
$l$, then by Assumption~\ref{ass:one}, $f^{ij,l}=0$, and hence
\begin{equation}
    \mathbf{V}_{\mathrm{dense}}^{ij,l}
    =
    f^{ij,l}\bar{\mathbf{V}}_{\mathrm{dense}}^{ij,l}
    =
    \mathbf{0}.
\end{equation}
By bilinearity of the TP,
\begin{equation}
    \bar{\mathbf{V}}_{\mathrm{dense}}^{ij,l+1}
    =
    \mathbf{A}_{\mathrm{dense}}^{i,l}
    \otimes \mathbf{0}
    =
    \mathbf{0}.
\end{equation}
If the edge exited at an earlier layer, the induction hypothesis already gives $\bar{\mathbf{V}}_{\mathrm{dense}}^{ij,l}=\mathbf{0}$, and the same conclusion follows. 
Thus, dense evaluation produces exactly zero for every TP output omitted by sparse evaluation. After zero-padding (Equation~\ref{eq:scalars}),
\begin{equation}
    \mathbf{z}_{\mathrm{dense}}^{ij,l+1}
    =
    \mathbf{z}_{\mathrm{sparse}}^{ij,l+1},
    \qquad
    \forall\,ij\in\mathcal E.
\end{equation}

By Assumption~\ref{ass:two}, the latent update in Equation~\ref{eq:latent} is evaluated for every edge in both implementations. 
Since its complete input is identical, it follows that
\begin{equation}
    \mathbf{x}_{\mathrm{dense}}^{ij,l+1}
    =
    \mathbf{x}_{\mathrm{sparse}}^{ij,l+1},
    \qquad
    w_{\mathrm{dense}}^{ij,l+1}
    =
    w_{\mathrm{sparse}}^{ij,l+1},
    \qquad
    \forall\,ij\in\mathcal E.
\end{equation}
This establishes the induction step.

Therefore the complete scalar latent histories are identical in dense
and sparse evaluation. By the common energy readout defined in
Equation~\ref{eq:energy_readout},
\begin{equation}
    E_{\mathrm{dense}}(\mathbf R)
    =
    E_{\mathrm{sparse}}(\mathbf R),
    \qquad
    \forall\,\mathbf R.
\end{equation}
\end{proof}

\subsection{Derivative equivalence}

To extend the equivalence to energy gradients, we additionally require:
\begin{enumerate}[label=\textbf{A\arabic*.},start=3]
    \item \label{ass:three}
    \textbf{Smooth routing.}
    The router probability map $\nu^{ij,l}$ and the continuation function $\mathcal{F}$ are $C^2$, and the zero set
    \begin{equation}
        \mathcal{Z}
        \coloneqq
        \{u:\mathcal{F}(u)=0\}
    \end{equation}
    is an interval with nonempty interior.
\end{enumerate}

The interior condition is essential: a continuation function vanishing only at an isolated point would satisfy $\mathcal{F}(\nu^\star)=0$ but generally $\mathcal{F}'(\nu^\star)\neq 0$, leaving a nonzero derivative contribution in the omitted branch.

\begin{Proposition} \label{prop:forces}
Under Assumptions~\ref{ass:one}--\ref{ass:three}, differentiating the sparse Branchlegro computation yields the same energy gradients and Hessians as dense gated evaluation. Consequently,
\begin{equation}
    \mathbf{F}_{\mathrm{sparse}}(\mathbf{R})
    = -\nabla_{\mathbf{R}} E_{\mathrm{sparse}}(\mathbf{R})
    = -\nabla_{\mathbf{R}} E_{\mathrm{dense}}(\mathbf{R})
    = \mathbf{F}_{\mathrm{dense}}(\mathbf{R}),
    \qquad
    \nabla^2_{\mathbf{R}} E_{\mathrm{sparse}} = \nabla^2_{\mathbf{R}} E_{\mathrm{dense}}.
\end{equation}
\end{Proposition}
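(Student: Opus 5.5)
The plan is to show that dense and sparse evaluations define the same $C^2$ function of $\mathbf{R}$ on an open neighborhood of every configuration, and that the sparse computational graph is itself a valid way to evaluate that function locally. Proposition~\ref{prop:energy} already gives pointwise equality $E_{\mathrm{sparse}}=E_{\mathrm{dense}}$, but pointwise equality does not yet say that differentiating the sparse graph (which treats the active sets $\mathcal{E}_l$ as fixed) gives the true derivative. The gap is that an edge with $f^{ij,l}=0$ at $\mathbf{R}$ may acquire a positive gate under an arbitrarily small perturbation. Assumption~\ref{ass:three} is designed to close exactly this gap.

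\textbf{Step 1: the dense energy is $C^2$.} Every operation in the dense evaluation is $C^2$ in $\mathbf{R}$: the underlying Allegro operations, the router maps $\nu^{ij,l}$, the gate $\mathcal{F}$, and the bilinear TP with gate multiplication. By the chain rule, $E_{\mathrm{dense}}$ is therefore $C^2$, and reverse-mode differentiation of the dense graph returns its exact gradient and Hessian.

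\textbf{Step 2: removed branches contribute nothing to first and second derivatives.} Fix $\mathbf{R}$ and an edge removed at layer $l$, so $f^{ij,l}(\mathbf{R})=0$. Then $u^\star=\nu^{ij,l}(\mathbf{R})\in\mathcal{Z}$. Because $\mathcal{F}\equiv0$ on $\mathcal{Z}$ and $\mathcal{F}\in C^2$, we get $\mathcal{F}'(u^\star)=\mathcal{F}''(u^\star)=0$. For $u^\star$ in the interior of $\mathcal{Z}$ this is immediate. For $u^\star$ at a boundary point it follows by continuity of $\mathcal{F}'$ and $\mathcal{F}''$ from the interior, using the one-sided limits. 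Applying Fa\`a di Bruno to $f^{ij,l}=\mathcal{F}\circ\nu^{ij,l}$,
\[
\nabla_{\mathbf R} f^{ij,l}=\mathcal{F}'\nabla\nu^{ij,l}=0,\qquad \nabla^2_{\mathbf R} f^{ij,l}=\mathcal{F}''\,\nabla\nu\,\nabla\nu^{\top}+\mathcal{F}'\nabla^2\nu=0 .
\]
So $f^{ij,l}$ vanishes to second order at $\mathbf{R}$. The downstream quantity $\bar{\mathbf V}^{ij,l+1}=\mathbf A^{i,l}\otimes(f^{ij,l}\bar{\mathbf V}^{ij,l})$ is a product of $f^{ij,l}$ with $C^2$ factors. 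By the Leibniz rule, its value, gradient, and Hessian at $\mathbf{R}$ all carry a factor of $f$, $\nabla f$, or $\nabla^2 f$, and hence all vanish. By induction over layers, as in Proposition~\ref{prop:energy}, the same holds for every downstream TP output of that edge. Any later gates applied to these outputs are multiplied into quantities that already vanish to second order, so the property persists.

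\textbf{Step 3: match the sparse derivatives to the dense ones.} The dense gradient and Hessian of $E$ are sums, via the chain rule, of contributions through the active branches plus contributions through the removed branches. By Step 2, the second group vanishes, and this includes the cross terms in the Hessian: each cross term contains at least one factor of $f$, $\nabla f$, or $\nabla^2 f$ of a removed edge at $\mathbf{R}$. What remains is exactly what differentiating the sparse graph computes. The sparse graph holds the removed edges' $\mathbf{z}$ at the constant $\mathbf 0$, while scalar latents and environment weights are kept on all edges by Assumption~\ref{ass:two}. Hence $\nabla E_{\mathrm{sparse}}=\nabla E_{\mathrm{dense}}$ and $\nabla^2 E_{\mathrm{sparse}}=\nabla^2 E_{\mathrm{dense}}$, and the force identity follows by negation. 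Stresses follow the same way with strain derivatives.

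\textbf{Main obstacle.} I expect the main obstacle to be Step 2 at boundary points of $\mathcal{Z}$, and the careful justification that zeroing a function together with its first two derivatives at a single point suffices for the \emph{second-order} chain rule. This is exactly where the nonempty-interior clause of Assumption~\ref{ass:three} enters, since at an isolated zero $\mathcal{F}'$ need not vanish.
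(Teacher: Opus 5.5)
Your proposal is correct and follows essentially the paper's route. Both arguments use the same facts: a $C^2$ gate that vanishes on an interval with nonempty interior has $\mathcal{F}'=\mathcal{F}''=0$ everywhere on $\mathcal{Z}$, with boundary points handled by continuity. So every removed branch contributes zero value, gradient, and Hessian, and the remaining shared $C^2$ computation gives identical first and second derivatives. The differences are cosmetic: you prove second-order vanishing for the gate $f$ and carry it through the TP with the Leibniz rule, whereas the paper expands $\mathbf{V}=\mathcal{F}(\nu)\bar{\mathbf{V}}$ directly, and you treat the Hessian cross terms and the frozen-active-set sparse graph a bit more explicitly than the paper does.
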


\begin{proof}
    By Proposition~\ref{prop:energy}, under
    Assumptions~\ref{ass:one} and~\ref{ass:two}, the dense and sparse
    energy functions are pointwise identical. It therefore remains to show
    that differentiation through the sparse computation does not omit a
    nonzero derivative contribution when the active set changes.

    Consider one gated tensor feature,
    \begin{equation}
        \mathbf{V}(\mathbf{R})
        =
        f(\mathbf{R})\bar{\mathbf{V}}(\mathbf{R})
        =
        \mathcal{F}(\nu(\mathbf{R}))
        \bar{\mathbf{V}}(\mathbf{R}).
    \end{equation}
    By Assumption~\ref{ass:three} and the twice continuous differentiability of the Allegro backbone, the gated feature is twice differentiable with respect to $\mathbf{R}$.
    By the chain rule,
    \begin{equation}
    \label{eq:tensor_grad}
    \begin{aligned}
        \nabla_{\mathbf{R}}\mathbf{V}
        &= \mathcal{F}'(\nu)\,(\nabla_{\mathbf{R}}\nu\otimes\bar{\mathbf{V}})
         + \mathcal{F}(\nu)\,\nabla_{\mathbf{R}}\bar{\mathbf{V}},
        \\
        \nabla^2_{\mathbf{R}}\mathbf{V}
        &= \mathcal{F}''(\nu)\,
           (\nabla_{\mathbf{R}}\nu\otimes\nabla_{\mathbf{R}}\nu\otimes\bar{\mathbf{V}})
        \\
        &\quad
         + \mathcal{F}'(\nu)\,
           \big(
           \nabla^2_{\mathbf{R}}\nu\otimes\bar{\mathbf{V}}
           + 2\,\nabla_{\mathbf{R}}\nu\otimes\nabla_{\mathbf{R}}\bar{\mathbf{V}}
           \big)
         + \mathcal{F}(\nu)\,\nabla^2_{\mathbf{R}}\bar{\mathbf{V}}.
    \end{aligned}
    \end{equation}
    Let $\mathcal{Z}$ be the zero-valued slicing interval defined in
    Assumption~\ref{ass:three}. We consider three cases.

    If $\nu\notin\mathcal{Z}$, then $\mathcal{F}(\nu)\neq0$ by definition of $\mathcal{Z}$. 
    By Assumption~\ref{ass:one}, no sparse slicing occurs, so the dense and sparse evaluations execute the same differentiable operations.

    If $\nu\in\operatorname{int}(\mathcal{Z})$, then $\mathcal{F}$ is locally constant and $\mathcal{F}(\nu)=0$, $\mathcal{F}'(\nu)=0$, and $\mathcal{F}''(\nu)=0$.
    Therefore, by Equation~\ref{eq:tensor_grad}, $\nabla_{\mathbf{R}}\mathbf{V}=\mathbf{0}$ and $\nabla^2_{\mathbf{R}}\mathbf{V}=\mathbf{0}$.
    Thus, a branch omitted by sparse evaluation contributes neither to the forward value nor to its derivatives.

    Finally, let $\nu^\star$ lie on the boundary of $\mathcal{Z}$.
    Since $\mathcal{F}$ is continuous, $\mathcal{Z}$ is closed and
    $\mathcal{F}(\nu^\star)=0$. By Assumption~\ref{ass:three},
    $\mathcal{F}'$ and $\mathcal{F}''$ are continuous, and both vanish
    identically on $\operatorname{int}(\mathcal{Z})$, where $\mathcal{F}$ is
    constant. Therefore,
    \begin{equation}
        \mathcal{F}'(\nu^\star)
        = \lim_{\substack{u\to\nu^\star\\u\in\operatorname{int}(\mathcal{Z})}}
        \mathcal{F}'(u) = 0,
        \qquad
        \mathcal{F}''(\nu^\star)
        = \lim_{\substack{u\to\nu^\star\\u\in\operatorname{int}(\mathcal{Z})}}
        \mathcal{F}''(u) = 0.
    \end{equation}
    Consequently, every term in Equation~\ref{eq:tensor_grad} vanishes, so
    $\nabla_{\mathbf{R}}\mathbf{V}=\mathbf{0}$ and
    $\nabla^2_{\mathbf{R}}\mathbf{V}=\mathbf{0}$.
    Hence, at the boundary where the sparse active set changes, the omitted
    branch has zero value, zero first derivative, and zero second derivative.

    Thus, changing the sparse active set removes no nonzero first- or second-derivative contribution. All remaining downstream operations are shared between dense and sparse evaluation and are $C^2$ in Allegro, so by the chain rule their first and second derivatives with respect to $\mathbf{R}$ coincide. Together with Proposition~\ref{prop:energy}, this gives
    \begin{equation}
    \nabla_{\mathbf{R}}E_{\mathrm{dense}}(\mathbf{R})
    =
    \nabla_{\mathbf{R}}E_{\mathrm{sparse}}(\mathbf{R}),
    \qquad
    \nabla^2_{\mathbf{R}}E_{\mathrm{dense}}(\mathbf{R})
    =
    \nabla^2_{\mathbf{R}}E_{\mathrm{sparse}}(\mathbf{R}),
    \end{equation}
    and hence
    \begin{equation}
    \mathbf{F}_{\mathrm{dense}}(\mathbf{R})
    =
    \mathbf{F}_{\mathrm{sparse}}(\mathbf{R}).
    \end{equation}
\end{proof}

The same argument extends directly to stresses by replacing derivatives with respect to atomic coordinates by derivatives with respect to strain.

\begin{Corollary}\label{cor:regularity}
Under Assumptions~\ref{ass:one}--\ref{ass:three}, the Branchlegro PES is $C^2$ in $\mathbf{R}$.
\end{Corollary}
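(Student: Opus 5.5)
The plan is to show that $E_{\mathrm{dense}}$ is a composition of $C^2$ maps, and then transfer its regularity to $E_{\mathrm{sparse}}$ using Propositions~\ref{prop:energy} and~\ref{prop:forces}. The key observation is that the dense gated evaluation has no discrete branching at all. It applies the same finite sequence of operations to every edge in $\mathcal{E}$, and the gate enters only as the multiplicative factor $f^{ij,l}=\mathcal{F}(\nu^{ij,l})$ in $\mathbf{V}^{ij,l}=f^{ij,l}\bar{\mathbf{V}}^{ij,l}$.

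The first step is to list the building blocks of the dense map $\mathbf{R}\mapsto E_{\mathrm{dense}}(\mathbf{R})$ and check that each is $C^2$:
\begin{itemize}
\item the Allegro embedding, including the radial envelope, which vanishes to second order at the cutoff, so edges entering or leaving the neighbor graph contribute in a $C^2$ way;
\item the spherical harmonics $\mathbf{Y}^{ij}$ and the environment sums $\mathbf{A}^{i,l}$;
\item the bilinear TP of Equation~\ref{eq:tp}, which is polynomial and hence $C^\infty$;
\item the latent and output MLPs of Equations~\ref{eq:latent} and~\ref{eq:energy_readout}, with their $C^2$ activations;
\item the router MLPs giving $\nu^{ij,l}$, together with $\mathcal{F}$, both $C^2$ by Assumption~\ref{ass:three} (for the concrete choice, $p=3$ gives $\mathcal{F}\in C^2$).
\end{itemize}
Since products and compositions of $C^2$ maps are $C^2$, the chain rule then gives $E_{\mathrm{dense}}\in C^2$ in $\mathbf{R}$.

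The second step transfers this to the sparse path. Proposition~\ref{prop:energy} gives $E_{\mathrm{sparse}}=E_{\mathrm{dense}}$ pointwise, so $E_{\mathrm{sparse}}$ is the same function and is therefore $C^2$. Proposition~\ref{prop:forces} adds that the derivatives actually computed by differentiating the sparse graph coincide with $\nabla E_{\mathrm{dense}}$ and $\nabla^2 E_{\mathrm{dense}}$. Hence the quantities returned by the sparse computation are the continuous first and second derivatives of a single $C^2$ surface, and no jumps appear when the active sets $\mathcal{E}_l$ change.

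The main obstacle is the first step's implicit claim that the dense map has no hidden discontinuities. The only candidates are:
\begin{itemize}
\item changes in the neighbor graph $\mathcal{E}$ as atoms cross the cutoff, which is handled by the Allegro backbone being $C^2$, as the theorem assumes;
\item the nonsmooth $(\cdot)_+$ inside $\mathcal{F}$, which is already absorbed into the statement $\mathcal{F}\in C^{p-1}$.
\end{itemize}
I would verify the second point explicitly at the threshold $x=T$: both one-sided derivatives up to order two vanish there, because $1-e^{-s^3}=s^3+O(s^6)$.
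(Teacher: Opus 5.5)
Your proposal is correct and follows the paper's own argument. The paper also shows $E_{\mathrm{dense}}$ is $C^2$ as a composition of $C^2$ maps (the Allegro backbone, the router $\nu$, and $\mathcal{F}$ via Assumption~\ref{ass:three}), then transfers this to the sparse PES through the pointwise identity of Proposition~\ref{prop:energy}; your appeal to Proposition~\ref{prop:forces} and your explicit check of $\mathcal{F}$ at $x=T$ go beyond what the corollary needs, but both are accurate.
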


\begin{proof}
By Assumption~\ref{ass:three}, $\mathcal{F}$ and $\nu$ are $C^2$, and the Allegro backbone is $C^2$. 
The dense gated energy is therefore $C^2$ as a composition of $C^2$ maps. By Proposition~\ref{prop:energy}, $E_{\mathrm{sparse}} = E_{\mathrm{dense}}$ pointwise, so the sparse energy inherits this regularity.
\end{proof}

The same argument applies at any order $m$ for which $\mathcal{F}$, $\nu$, and the backbone are $C^m$, and derivatives up to order $m$ computed through the sparse path are exact.

\subsection{Symmetry preservation}

\begin{Proposition}\label{prop:symmetry}
    Given an $E(3)$-invariant continuation gate produced by a shared edge-wise router, Branchlegro preserves 
    (i) the $E(3)$-equivariance of Allegro's tensor features and invariance of its energy,
    and (ii) its permutation symmetry.
\end{Proposition}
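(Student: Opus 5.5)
We reduce the statement to the known symmetry properties of Allegro by inserting the gates into its forward pass. By Proposition~\ref{prop:energy} the sparse and dense gated evaluations give identical scalar latents and energies, and on active edges identical tensor features. It therefore suffices to prove the symmetries for the dense gated computation and then check that the active sets $\mathcal{E}_l$ transform consistently.

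For (i), fix $g\in E(3)$ acting by $\mathbf{r}_i\mapsto Q\mathbf{r}_i+\mathbf{t}$. We induct over layers with the hypothesis $H_l$:
\begin{itemize}
\item the scalars $\mathbf{x}^{ij,l}$ and weights $w^{ij,l}$ are invariant;
\item the tensor features satisfy $\bar{\mathbf{V}}^{ij,l}(g\mathbf{R})=D(Q)\bar{\mathbf{V}}^{ij,l}(\mathbf{R})$, with $D$ the Wigner block representation, including parity.
\end{itemize}
The base case $H_0$ is Allegro's embedding: distances and species are invariant, and $\mathbf{Y}^{ij}$ transforms under $D(Q)$.

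For the step, the gate $f^{ij,l}=\mathcal{F}(\mathrm{MLP}_{\mathrm{router}_l}(\mathbf{x}^{ij,l}))$ depends only on invariant scalars, so it is invariant. Multiplying an equivariant feature by an invariant scalar preserves equivariance, so $\mathbf{V}^{ij,l}=f^{ij,l}\bar{\mathbf{V}}^{ij,l}$ is equivariant. The environment tensor $\mathbf{A}^{i,l}$ is an invariant-weighted sum of equivariant $\mathbf{Y}^{ik}$, hence equivariant. The Clebsch--Gordan tensor product of two equivariant inputs is equivariant, so $\bar{\mathbf{V}}^{ij,l+1}$ is equivariant. Its $\ell=0$ part $\mathbf{z}^{ij,l+1}$ is then invariant. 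The latent MLP acts on invariant inputs, which gives $H_{l+1}$.

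The same invariance of $f^{ij,l}$ shows that $\mathcal{E}_{l+1}=\{ij\in\mathcal{E}_l: f^{ij,l}>0\}$ is unchanged under $g$. Hence the sparse computation uses the same edges at $\mathbf{R}$ and at $g\mathbf{R}$, and the zero-padding in Equation~\ref{eq:scalars} commutes with the group action. The energy readout in Equation~\ref{eq:energy_readout} is a sum of an MLP applied to invariant scalars, so $E$ is invariant.

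For (ii), let $\pi$ be a permutation of atom labels that preserves species. It induces a bijection $ij\mapsto\pi(i)\pi(j)$ on the neighbor edge set $\mathcal{E}$, since the cutoff graph is defined from distances. We show by induction that every edge quantity is relabeled consistently:
\begin{equation*}
\mathbf{x}^{\pi(i)\pi(j),l}(\pi\mathbf{R})=\mathbf{x}^{ij,l}(\mathbf{R}),
\end{equation*}
and likewise for $w^{ij,l}$, $\bar{\mathbf{V}}^{ij,l}$ and $f^{ij,l}$. This uses two facts:
\begin{itemize}
\item the same router MLP, and the same $\mathcal{F}$, is shared across all edges, which is hypothesis (i) of Theorem~\ref{theo:main};
\item $\mathbf{A}^{i,l}$ is a sum over the neighbor set of $i$, which is invariant under reordering.
\end{itemize}
It follows that $\mathcal{E}_{l}(\pi\mathbf{R})=\pi(\mathcal{E}_l(\mathbf{R}))$, and the energy, a sum over $\mathcal{E}$ of a shared readout, is permutation invariant. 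Forces then permute and rotate covariantly by differentiation, using Proposition~\ref{prop:forces}.

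The main obstacle is the bookkeeping for the sparse path. One must make sure the discrete active-set selection, indexing and zero-padding introduce no dependence on absolute orientation or on labels. We handle this by reducing to the dense path via Proposition~\ref{prop:energy}, together with the explicit statements that the active sets are invariant under $E(3)$ and equivariant under $\pi$.
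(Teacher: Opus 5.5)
Your proof is correct and follows the paper's argument: an induction over layers in which the invariant gate keeps the gated tensor features equivariant and leaves the active sets $\mathcal{E}_l$ unchanged under $E(3)$, while the shared router makes gates and active sets relabel consistently under permutations. The differences are cosmetic: you reduce to the dense gated path via Proposition~\ref{prop:energy} and spell out the environment-tensor and Clebsch--Gordan steps, whereas the paper inducts directly on the sparse path and simply cites the equivariance of the remaining Allegro operations.
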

\begin{proof}
    Let $g\in O(3)$ denote a rotation or reflection acting on the atomic configuration $\mathbf R$, and let $D[g]$ denote the corresponding representation acting on Allegro's equivariant tensor features.
    Translations act trivially since Allegro depends only on relative geometry. 
    Let $\pi$ denote a permutation of atom indices, acting on edge labels as $ij\mapsto\pi(i)\pi(j)$.
    
\textbf{$E(3)$-equivariance.}
We proceed inductively over layers. Initially,
$\mathcal{E}_0(\mathbf{R})=\mathcal{E} = \mathcal{E}_0(g\mathbf{R})$.
Assume at layer $l$ that the active sets coincide and that
$\bar{\mathbf V}^{ij,l}$ is equivariant. 
By invariance of the continuation gate,
$f^{ij,l}(g\mathbf R)=f^{ij,l}(\mathbf R)$.
Thus,
\begin{equation}
\begin{aligned}
\mathbf V^{ij,l}(g\mathbf R)
&= f^{ij,l}(g\mathbf R)\bar{\mathbf V}^{ij,l}(g\mathbf R) \\
&= f^{ij,l}(\mathbf R)D[g]\bar{\mathbf V}^{ij,l}(\mathbf R) \\
&= D[g]\mathbf V^{ij,l}(\mathbf R).
\end{aligned}
\end{equation}
Moreover,
\begin{equation}
\begin{aligned}
\mathcal E_{l+1}(g\mathbf R)
&= \{ij\in\mathcal E_l(g\mathbf R):
f^{ij,l}(g\mathbf R)>0\} \\
&= \{ij\in\mathcal E_l(\mathbf R):
f^{ij,l}(\mathbf R)>0\} 
= \mathcal E_{l+1}(\mathbf R).
\end{aligned}
\end{equation}

The subsequent Allegro operations are equivariant, so $\bar{\mathbf V}^{ij,l+1}$ retains the corresponding transformation law. The result therefore follows by induction, with the scalar energy readout remaining $E(3)$-invariant.

\textbf{Permutation symmetry.}
We proceed inductively over layers. Initially,
$\mathcal E_0(\pi\mathbf R) = \pi\!\left(\mathcal E_0(\mathbf R)\right)$.
Assume at layer $l$ that the active sets and incoming tensor features
permute consistently,
\begin{equation*}
\mathcal E_l(\pi\mathbf R)
=
\pi\!\left(\mathcal E_l(\mathbf R)\right),
\qquad
\bar{\mathbf V}^{\pi(i)\pi(j),l}(\pi\mathbf R)
=
\bar{\mathbf V}^{ij,l}(\mathbf R).
\end{equation*}
Since Allegro's edge features permute consistently and the same router is applied to every edge,
$f^{\pi(i)\pi(j),l}(\pi\mathbf R) = f^{ij,l}(\mathbf R)$. 
Thus,
\begin{equation}
\begin{aligned}
\mathbf V^{\pi(i)\pi(j),l}(\pi\mathbf R)
&=
f^{\pi(i)\pi(j),l}(\pi\mathbf R)
\bar{\mathbf V}^{\pi(i)\pi(j),l}(\pi\mathbf R) \\
&=
f^{ij,l}(\mathbf R)
\bar{\mathbf V}^{ij,l}(\mathbf R) \\
&=
\mathbf V^{ij,l}(\mathbf R).
\end{aligned}
\end{equation}
Moreover,
\begin{equation}
\begin{aligned}
\mathcal E_{l+1}(\pi\mathbf R)
&=
\{\pi(i)\pi(j)\in\mathcal E_l(\pi\mathbf R):
f^{\pi(i)\pi(j),l}(\pi\mathbf R)>0\} \\
&=
\pi\!\left(
\{ij\in\mathcal E_l(\mathbf R):
f^{ij,l}(\mathbf R)>0\}
\right) \\
&=
\pi\!\left(\mathcal E_{l+1}(\mathbf R)\right).
\end{aligned}
\end{equation}
The subsequent Allegro operations preserve permutation symmetry, so
$\bar{\mathbf V}^{ij,l+1}$ permutes consistently as well. The result
therefore follows by induction, and the final sum over edge energies is
permutation invariant.
\end{proof}
In Branchlegro, each router is an MLP acting only on Allegro's invariant scalar edge features $\mathbf{x}^{ij,l}$. 
Therefore its output $\nu^{ij,l}$, and consequently the continuation gate $f^{ij,l}=\mathcal{F}(\nu^{ij,l})$, are $E(3)$-invariant.

Together, Propositions~\ref{prop:energy}--\ref{prop:symmetry} and Corollary~\ref{cor:regularity} establish Theorem~\ref{theo:main}: Assumptions~\ref{ass:one} and~\ref{ass:two} state condition (ii) for Allegro's edge-wise routing, Assumption~\ref{ass:three} states the smoothness in (i), and the invariance in (i) follows from the routers acting on invariant scalar features.

\section{Training Setup} \label{app:setup}

All model training was conducted on a combination of  NVIDIA A100-SXM4-40GB, NVIDIA A100-SXM4-80GB, and H200 GPUs. For fine-tuning the FMs, the models were compiled and employed cuEquivariance TP kernels.
The hyperparameters of the bespoke Allegro models for CO/Pt and CDP are summarized in Table~\ref{tab:copt_architecture} and Table~\ref{tab:cdp_architecture}. The corresponding hyperparameters of the Allegro and Branchlegro training are reported in Tables~\ref{tab:copt_allegro_training}--\ref{tab:copt_branchlegro_training} for CO/Pt and Tables~\ref{tab:cdp_allegro_training}--\ref{tab:cdp_branchlegro_training} for CDP.
Across Branchlegro runs, the random seed controls router initialization and data-batch ordering; the teacher model and train/validation/test split were held fixed.

\begin{table*}[tbhp]
\centering
\caption{Bespoke model architecture and training hyperparameters used for CO/Pt.}
\label{tab:copt_hyperparams}
\small

\begin{subtable}[t]{0.45\textwidth}
\centering
\caption{Architecture~\citep{owen_surface_2024}.}
\label{tab:copt_architecture}

{\footnotesize\scshape
\begin{tabularx}{\linewidth}{@{}>{\raggedright\arraybackslash}X l@{}}
  \toprule
  Hyperparameter & Value \\
  \midrule
  Cutoff radius & $8.0\,\mathrm{\AA}$ \\
  Number of layers & $1\;\; / \;\;2$ \\
  $l_{\max}$ & $4$ \\
  Parity & True \\
  Number of scalar features & $256$ \\
  Number of tensor features & $16$ \\
  Radial Bessel basis functions & $8$ \\
  Trainable Bessel basis & False \\
  Polynomial cutoff exponent & $6$ \\
  Radial-chemical embedding dim & $64$ \\
  Scalar embedding MLP depth & $1$ \\
  Scalar embedding MLP width & $128$ \\
  Scalar embedding nonlinearity & SiLU \\
  Allegro MLP depth & $3$ \\
  Allegro MLP width & $256$ \\
  Allegro MLP nonlinearity & SiLU \\
  TP path-channel coupling & False \\
  Readout MLP depth & $1$ \\
  Readout MLP width & $256$ \\
  Readout MLP nonlinearity & SiLU \\
  Per-type energy scales trainable & False \\
  Per-type energy shifts trainable & False \\
  \bottomrule
\end{tabularx}
}
\end{subtable}
\hfill
\begin{minipage}[t]{0.50\textwidth}

\begin{subtable}[t]{\linewidth}
\centering
\caption{Allegro training.}
\label{tab:copt_allegro_training}

{\footnotesize\scshape
\begin{tabularx}{\linewidth}{@{}>{\raggedright\arraybackslash}X l@{}}
  \toprule
  Hyperparameter & Value \\
  \midrule
  Epochs & $1000$ \\
  Optimizer & Adam \\
  Learning rate & $2\times10^{-3}$ \\
  LR scheduler & Reduce on Plateau \\
  Early stopping & Patience $300$ \\
  Force loss coeff. & $1$ \\
  Energy loss coeff. & $1$ \\
  Stress loss coeff. & $1$ \\

  Batch size & $2$\\
  \bottomrule
\end{tabularx}
}
\end{subtable}

\vspace{1em}

\begin{subtable}[t]{\linewidth}
\centering
\caption{Branchlegro distillation.}
\label{tab:copt_branchlegro_training}

{\footnotesize\scshape
\begin{tabularx}{\linewidth}{@{}>{\raggedright\arraybackslash}X l@{}}
  \toprule
  Hyperparameter & Value \\
  \midrule
  Epochs & $100$ \\
  Optimizer & Adam \\
  Backbone learning rate & $1\times10^{-3}$ \\
  Router learning rate & $5\times10^{-4}$ \\

  LR scheduler & Cosine Annealing \\
  
  Force loss coeff. & $1$\\
  Energy loss coeff. & $1$\\
  Stress loss coeff. & $1$\\

  $\mathcal{L}_{\mathrm{distill}}$ coeff. & $0.05$ \\

  Continue tolerance & $0:0.060,\ 1: 0.095$ \\
  Stop tolerance & $0:0.12,\ 1:0.155$ \\

  Router MLP depth & $2$ \\
  Router MLP width & $8$ \\

  Batch size & $8$\\
  \bottomrule
\end{tabularx}
}
\end{subtable}

\end{minipage}

\end{table*}

\begin{table*}[tbhp]
\small
\centering
\caption{FM fine-tuning hyperparameters used for CO/Pt (architecture from \cite{kavanagh_fast_2026}).}
\label{tab:copt_FM}

\begin{subtable}[t]{0.45\textwidth}
\centering
\caption{Allegro-OAM-L fine-tuning.}
\label{tab:copt_allegro_tuning}

{\footnotesize\scshape
\begin{tabularx}{\linewidth}{@{}>{\raggedright\arraybackslash}X l@{}}
  \toprule
  Hyperparameter & Value \\
  \midrule
  Epochs & $1000$ \\
  Optimizer & AdamW \\
  Learning rate & $4.1\times10^{-4}$ \\
  Weight decay & $3.4\times10^{-7}$ \\
  LR scheduler & Cosine Annealing \\
  Early stopping & Patience $600$ \\
  Force loss coeff. & $1$ \\
  Energy loss coeff. & $1$ \\
  Stress loss coeff. & $1$ \\

  Batch size & $5$\\
  \bottomrule
\end{tabularx}
}
\end{subtable}
\hfill
\begin{subtable}[t]{0.50\textwidth}
\centering
\caption{Branchlegro distillation.}
\label{tab:copt_branchlegro_tuning}

{\footnotesize\scshape
\begin{tabularx}{\linewidth}{@{}>{\raggedright\arraybackslash}X l@{}}
  \toprule
  Hyperparameter & Value \\
  \midrule
  Epochs & $200$ \\
  Optimizer & Adam \\
  Backbone learning rate & $1\times10^{-3}$ \\
  Router learning rate & $1\times10^{-4}$ \\
  LR scheduler & Cosine Annealing \\
  
  Force loss coeff. & $1$ \\
  Energy loss coeff. & $1$ \\
  Stress loss coeff. & $1$ \\
  $\mathcal{L}_{\mathrm{distill}}$ coeff. & $0.1$ \\

  Continue tolerance &
    \makecell[l]{$0:0.030,\ 1:0.042,$\\$2:0.048,\ 3:0.060$} \\
  Stop tolerance &
    \makecell[l]{$0:0.060,\ 1:0.071,$\\$2:0.077,\ 3:0.090$} \\

  Router MLP depth & $1$ \\
  Router MLP width & $8$ \\

  Batch size & $10$\\
  \bottomrule
\end{tabularx}
}
\end{subtable}

\end{table*}

\begin{table*}[tbhp]
\centering
\caption{Bespoke model architecture and training hyperparameters used for CDP.}
\label{tab:cdp_hyperparams}

\begin{subtable}[t]{0.45\textwidth}
\centering
\caption{Architecture~\citep{wang_revealing_2025}.}
\label{tab:cdp_architecture}

{\footnotesize\scshape
\begin{tabularx}{\linewidth}{@{}>{\raggedright\arraybackslash}X l@{}}
  \toprule
  Hyperparameter & Value \\
  \midrule
  Cutoff radius & $7.0\,\mathrm{\AA}$ \\
  Number of layers & $1\;\; / \;\;2$ \\
  $l_{\max}$ & $2$ \\
  Parity & True \\
  Number of scalar features & $32$ \\
  Number of tensor features & $32$ \\
  Radial Bessel basis functions & $8$ \\
  Trainable Bessel basis & False \\
  Polynomial cutoff exponent & $6$ \\
  Radial-chemical embedding dim & $32$ \\
  Scalar embedding MLP depth & $2$ \\
  Scalar embedding MLP width & $128$ \\
  Scalar embedding nonlinearity & SiLU \\
  Allegro MLP depth & $2$ \\
  Allegro MLP width & $128$ \\
  Allegro MLP nonlinearity & SiLU \\
  TP path-channel coupling & False \\
  Readout MLP depth & $1$ \\
  Readout MLP width & $32$ \\
  Readout MLP nonlinearity & None \\
  Per-type energy scales trainable & True \\
  Per-type energy shifts trainable & True \\
  \bottomrule
\end{tabularx}
}
\end{subtable}
\hfill
\begin{minipage}[t]{0.50\textwidth}

\begin{subtable}[t]{\linewidth}
\centering
\caption{Allegro training~\citep{harari_beyond_2026}.}
\label{tab:cdp_allegro_training}

{\footnotesize\scshape
\begin{tabularx}{\linewidth}{@{}>{\raggedright\arraybackslash}X l@{}}
  \toprule
  Hyperparameter & Value \\
  \midrule
  Epochs & 1000 \\
  Optimizer & AdamW\\
  Learning rate & $3 \times 10^{-3}$\\
  Weight decay & $1 \times 10^{-4}$\\
  LR scheduler & Cosine Annealing \\
  Early stopping & Patience $600$ \\
  
  Force loss coeff. & $10$\\
  Energy loss coeff. & $1$\\

  Batch size & $5$\\
  \bottomrule
\end{tabularx}
}
\end{subtable}

\vspace{1em}

\begin{subtable}[t]{\linewidth}
\centering
\caption{Branchlegro distillation.}
\label{tab:cdp_branchlegro_training}

{\footnotesize\scshape
\begin{tabularx}{\linewidth}{@{}>{\raggedright\arraybackslash}X l@{}}
  \toprule
  Hyperparameter & Value \\
  \midrule
  Epochs & $500$ \\
  Optimizer & Adam \\
  Backbone learning rate & $5\times10^{-3}$ \\
  Router learning rate & $5\times10^{-4}$ \\

  LR scheduler & Cosine Annealing \\
  
  Force loss coeff. & $1$\\
  Energy loss coeff. & $1$\\
  $\mathcal{L}_{\mathrm{distill}}$ coeff. & $0.1$ \\

  Continue tolerance & $0:0.05,\ 1:0.08$ \\
  Stop tolerance & $0:0.08,\ 1:0.12$ \\

  Router MLP depth & $1$ \\
  Router MLP width & $8$ \\

  Batch size & $5$\\
  \bottomrule
\end{tabularx}
}
\end{subtable}

\end{minipage}

\end{table*}

\begin{table*}[tbhp]
\centering
\caption{FM fine-tuning hyperparameters used for CDP (architecture from \cite{kavanagh_fast_2026}).}
\label{tab:cdp_FM}

\begin{subtable}[t]{0.45\textwidth}
\centering
\caption{Allegro-OAM-L fine-tuning.}
\label{tab:cdp_allegro_tuning}

{\footnotesize\scshape
\begin{tabularx}{\linewidth}{@{}>{\raggedright\arraybackslash}X l@{}}
  \toprule
  Hyperparameter & Value \\
  \midrule
  Epochs & $1000$ \\
  Optimizer & AdamW \\
  Learning rate & $1.0\times10^{-3}$ \\
  Weight decay & $2.3\times10^{-7}$ \\
  LR scheduler & Cosine Annealing \\
  Early stopping & Patience $600$ \\
  Force loss coeff. & $10$ \\
  Energy loss coeff. & $1$ \\

  Batch size & $5$\\
  \bottomrule
\end{tabularx}
}
\end{subtable}
\hfill
\begin{subtable}[t]{0.50\textwidth}
\centering
\caption{Branchlegro distillation.}
\label{tab:cdp_branchlegro_tuning}

{\footnotesize\scshape
\begin{tabularx}{\linewidth}{@{}>{\raggedright\arraybackslash}X l@{}}
  \toprule
  Hyperparameter & Value \\
  \midrule
  Epochs & $500$ \\
  Optimizer & Adam \\
  Backbone learning rate & $5\times10^{-4}$ \\
  Router learning rate & $5\times10^{-5}$ \\
  LR scheduler & Cosine Annealing \\

  Force loss coeff. & $1$ \\
  Energy loss coeff. & $1$ \\
  $\mathcal{L}_{\mathrm{distill}}$ coeff. & $0.05$ \\

  Continue tolerance &     
     \makecell[l]{$0:0.05,\ 1:0.07,$\\$2:0.09,\ 3:0.11$} \\
  Stop tolerance &
    \makecell[l]{$0:0.10,\ 1:0.12,$\\$2:0.14,\ 3:0.16$} \\

  Router MLP depth & $1$ \\
  Router MLP width & $8$ \\

  Batch size & $4$\\
  \bottomrule
\end{tabularx}
}
\end{subtable}

\end{table*}

\pagebreak

\section{MD throughput and memory measurements}\label{app:md_throughput}

Molecular dynamics throughput was benchmarked in LAMMPS \citep{thompson_lammps_2022, johansson_lammps-kokkos_2025} using NVT simulations at 525~K with a 0.5~fs timestep for CDP and 300~K with a 1~fs timestep for CO/Pt. 
Each benchmark used an initial $10^4$-step warm-up period followed by a production run of up to $9.99\times10^6$ steps. 
LAMMPS performance statistics were recorded every 100 steps, and the benchmark run was limited to 1200~s to keep comparisons consistent across models and system sizes. 
Throughput was evaluated from the reported simulation performance after the warm-up period.
Pareto benchmarks were run on NVIDIA A100-SXM4-40GB; all remaining benchmarks were run on NVIDIA A100-SXM4-80GB GPUs. 
Models were compiled using \texttt{torch.compile}.

GPU memory was measured using PyTorch's CUDA memory profiler. Before each force evaluation in LAMMPS, the CUDA peak memory counters were reset. After the force evaluation, the peak allocated GPU memory were recorded. Reported values correspond to the maximum over all force evaluations during the simulation.
Models were compiled using \texttt{torch.compile}.

\section{Physical Fidelity} \label{app:physical_fidelity}

\subsection{Energy profile for CO/Pt}

Figure~\ref{fig:app-copt_eos} includes energy profiles of the Allegro and Branchlegro CO/Pt MLIPs against the density functional theory (DFT) ground truth. This includes energy-volume equation of state of bulk Pt and an isolated-CO bond-stretch curve. The DFT calculations followed the methodology reported by \citet{owen_surface_2024}, using VASP with PAW potentials, the vdW-DF exchange-correlation functional, and a 500 eV plane-wave cutoff. The Pt validation supercell used a 3×3×3 supercell of a conventional FCC cell with an optimized Pt lattice constant of approximately 4.03 Å. We used metallic sampling and smearing for Pt, molecular sampling and dipole corrections for CO, and consistent electronic convergence settings. 

\begin{figure}[!h]
    \centering
    \includegraphics[width=\textwidth]{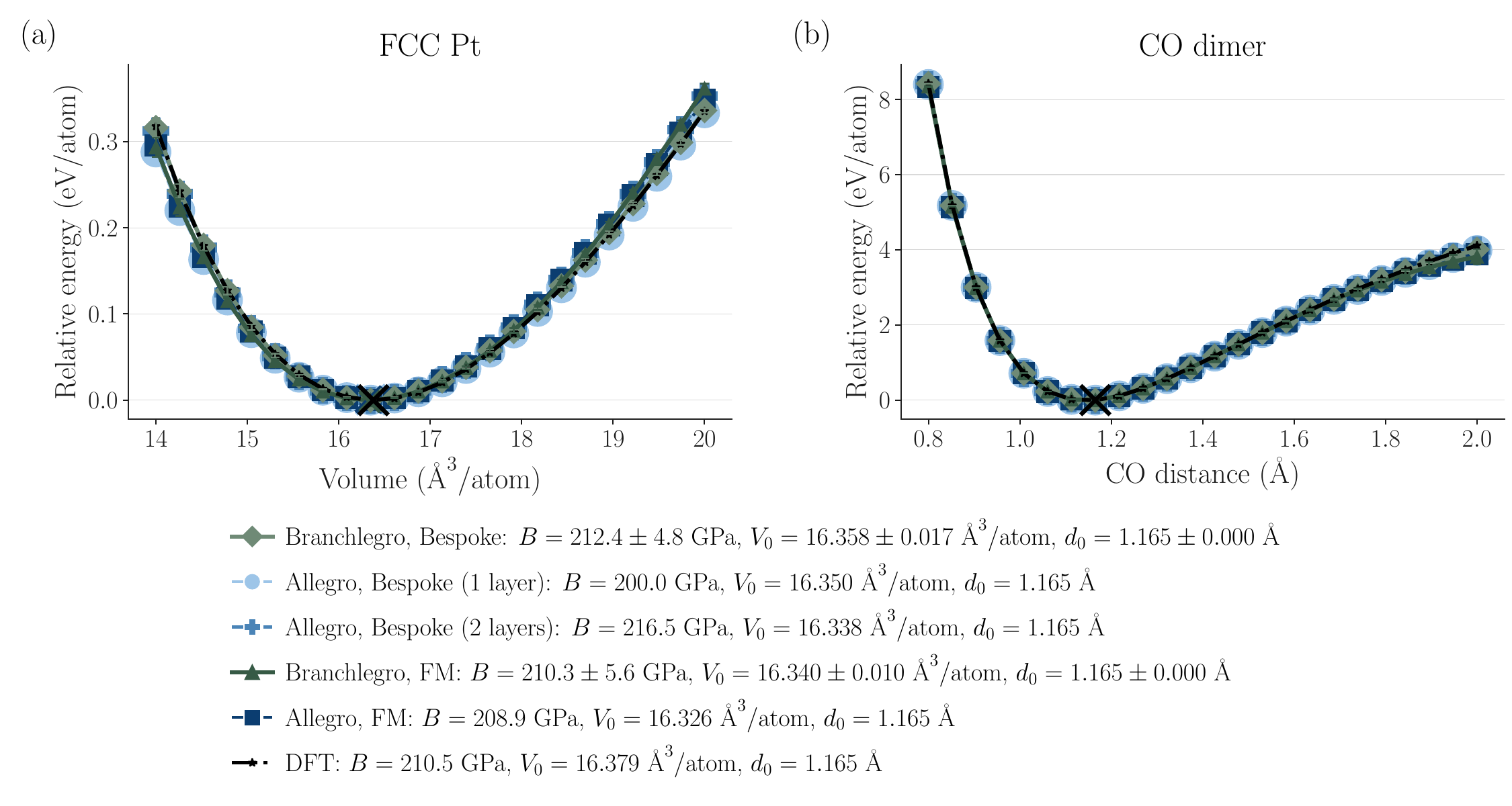}
    \caption{(a) Equation of state of fcc Pt, energy per atom as a function of atomic volume. Equilibrium volumes ($V_0$) and bulk moduli ($B$) were obtained using ASE’s stabilized-jellium equation-of-state fit. (b) Absolute energy of an isolated CO dimer as a function of bond length, with $d_0$ denoting the lowest-energy sampled distance. Crosses mark the corresponding energy minima. Branchlegro results are reported as mean $\pm$ standard deviation over seven seeds.}
    \label{fig:app-copt_eos}
\end{figure}

Overall, all models show close agreement with DFT across the physical-fidelity evaluations. 
Both Allegro and Branchlegro reproduce the relevant energy landscapes and derived material properties, indicating that adaptive routing preserves the underlying physical response of the parent models.

Using the DFT bulk modulus, $B_{\mathrm{DFT}}=210.5$ GPa, as reference, the bespoke one- and two-layer Allegro models deviate by $-4.97\%$ and $+2.85\%$. Branchlegro predicts $B=212.4$ GPa, corresponding to a smaller deviation of $+0.90\%$ than either bespoke baseline. For the FMs, both Allegro and Branchlegro remain within $1\%$ of DFT, with deviations of $-0.75\%$ and $-0.10\%$, respectively. Thus, adaptive routing overall preserves the elastic response of both bespoke and FMs.

\subsection{MD simulations for CDP activation energy} \label{app:cdp_activation}

To further validate Branchlegro dynamics for CDP, molecular dynamics (MD) simulations were performed in LAMMPS. We compare radial distribution functions (RDFs) from Branchlegro and Allegro models with \textit{ab initio} MD (AIMD) using the DFT method used to generate the CDP training dataset \citet{wang_revealing_2025}. The AIMD RDF reference averages two 144-atom trajectories that differ only in their initial velocity seed. For the bespoke models, Branchlegro, L1, and L2 Allegro models, RDFs are computed from the average of three 525~K MD simulations that differ only in their initial velocity seed with 1000 atoms. For the more computationally expensive FM models the RDFs are computed from the average of three 144-atom trajectories at 525~K that differ only in their initial velocity seed.

\begin{figure}[!h]
    \centering
    \includegraphics[width=\textwidth]{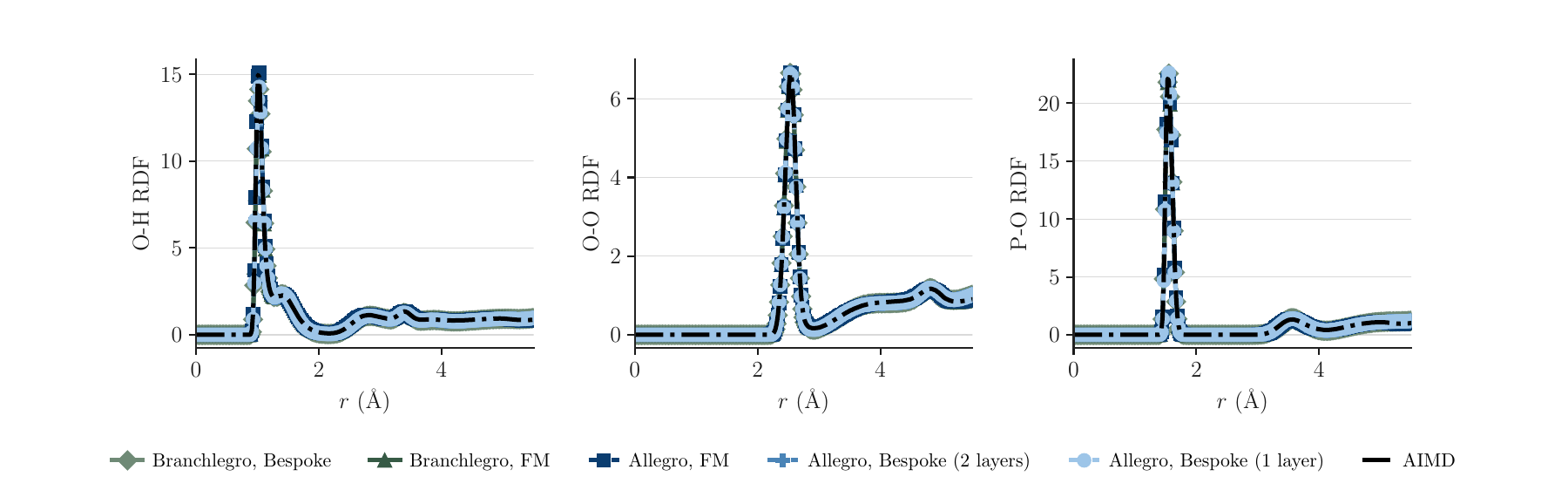}
    \caption{Radial distribution functions for O-H, O-O, and P-O pairs in molecular dynamics (MD) simulations of CDP at 525 K with \textit{ab initio} MD reference.}
    \label{fig:CDP_RDFs}
\end{figure}

Along with RDFs, we calculate the proton diffusion activation energy, a validation requiring accurate long-timescale MD dynamics. Simulations were performed in LAMMPS using the L1 Allegro, L2 Allegro, and Branchlegro bespoke interatomic potentials. Simulations were conducted in the NVT ensemble with periodic boundary conditions in all three dimensions with 1000 atoms and a Nosé–Hoover thermostat. Initial velocities were drawn from a Gaussian distribution at the target temperature. A timestep of 0.5~fs was used, and each simulation was run for $4\times10^6$ steps, corresponding to 2 ns of simulated time. Atomic configurations were recorded every 100 steps.

Hydrogen diffusion was determined from the mean-squared displacement (MSD) of H atoms averaged over three trajectories at each temperature that differ only in the initial velocity seed. The diffusion coefficient was obtained from the linear diffusive regime using the three-dimensional Einstein relation. The initial 10~ps of each trajectory were excluded from the MSD analysis. Diffusion coefficients were calculated at 525~K, 550~K, 575~K, and 625~K for each potential. Activation energies were obtained by fitting the resulting temperature-dependent diffusion coefficients to the Arrhenius relation. We restrict activation energy calculations to the bespoke models because the required long-timescale simulations are extremely expensive for the FMs.

\begin{figure}[!h]
    \centering
    \includegraphics[width=\textwidth]{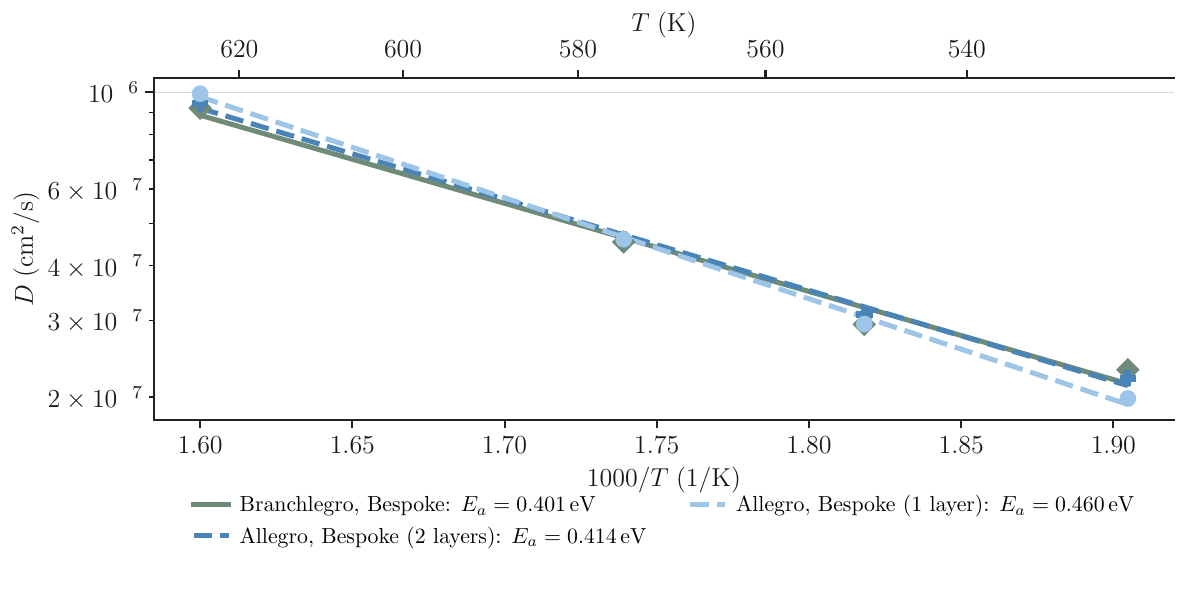}
    \caption{Arrhenius plot of diffusion coefficients for L1 Allegro, L2 Allegro, and Branchlegro bespoke models. The activation energy, reported in the legend, is determined from the slope of the fitted linear regression.}
    \label{fig:activation_energy}
\end{figure}

\section{Energy conservation in MD}\label{app:energy_conservation}

For a correct MD simulation, the total energy (summation of potential and kinetic energies) should remain constant over time in the microcanonical (NVE) ensemble. To evaluate the energy conservation of Branchlegro, we run NVE MD simulations in LAMMPS for all Branchlegro models. Simulations were initialized at 525~K and 300~K with periodic boundary conditions for CDP and CO/Pt and a timestep of 0.5~fs and 1~fs (production timesteps), and were propagated for 100~ps. The total energy was recorded every 100 steps and converted to an energy drift per atom relative to the initial value, after an initial 10~ps equilibration period. Energy conservation was assessed by monitoring this change in total energy as a function of simulation time for the different Branchlegro and Allegro models.

\begin{figure}[!h]
    \centering
    \includegraphics[width=\textwidth]{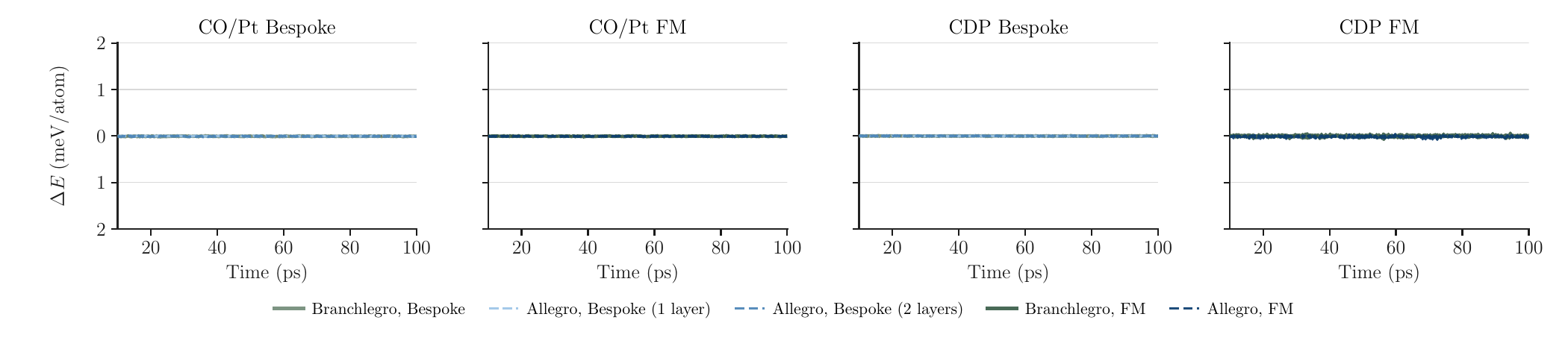}
    \caption{Energy conservation during molecular dynamics in the microcanonical ensemble. Each panel shows the drift in total energy, reported as the change in total energy in meV per atom relative to the value at 10~ps. All cases show stable and conservative dynamics.}
    \label{fig:energy_conservation}
\end{figure}

\section{Interpretability details} \label{app:interpretability}

The CO/Pt analysis utilized the following classification rules:
A Pt is adsorbate-bound if a C atom lies within $2.6$\,\AA\footnote{Determined by JmolNN from \citet{pan_benchmarking_2021} with the radii taken from \citet{cordero_covalent_2008}.}; otherwise it is bare surface if its Pt--Pt coordination within $3.2$\AA~(tail of first peak in Pt--Pt RDF, Figure~\ref{fig:routing_vis}c) is below the fcc bulk value of 12, and bulk if not.
Adsorbate-bound Pt is further split by the coordination of its adsorbed carbon under the same $2.6$\,\AA~criterion: top if that C binds a single Pt, multiple if it is shared with more than one Pt (bridge or hollow). Figure~\ref{fig:app-adsorption} shows the different adsorption sites for CO in Pt.

\begin{figure*}[!b]
  \centering
  \includegraphics[width=\textwidth]{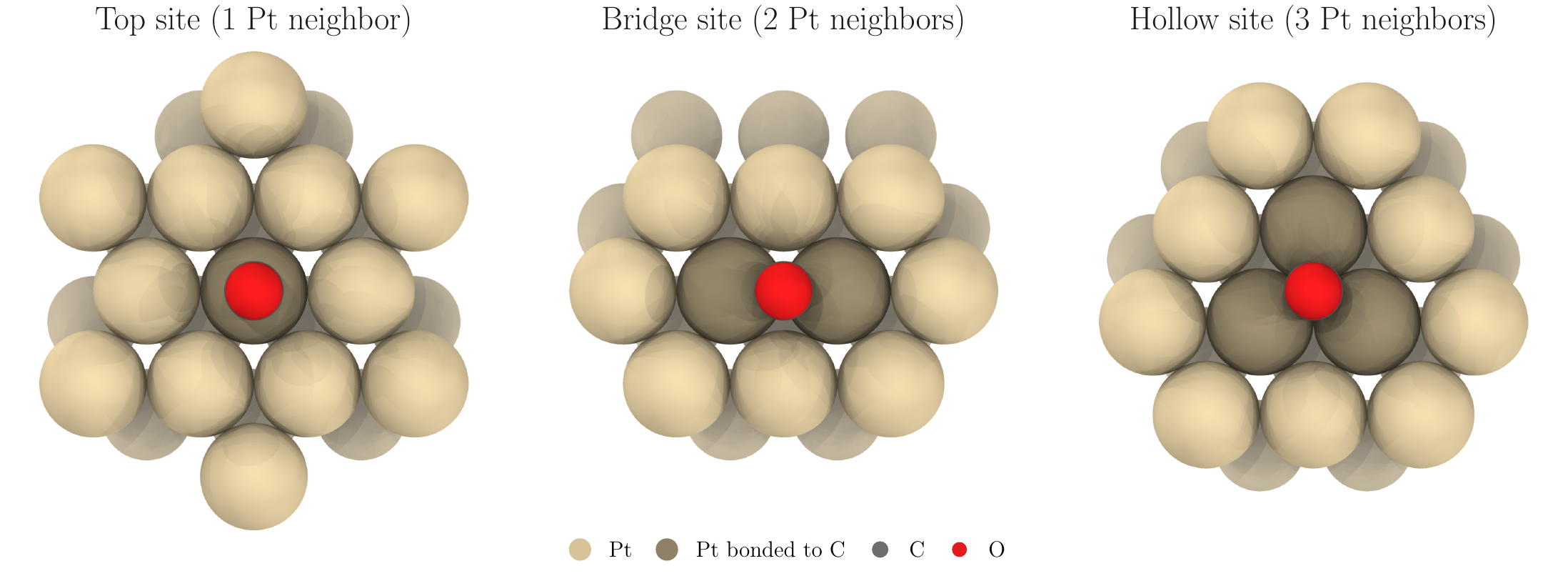} 
  \caption{Schematic of CO adsorption sites on Pt.}
  \label{fig:app-adsorption}
\end{figure*}

The CDP analysis utilized the following classification rules:
For the P--H edges, covalent was classified as P owning the H's closest O atom. Hydrogen-bonded was classified when that P owns the H's second nearest O and that O is within $2.20$\,\AA~(the minimum after the second peak in the O--H RDF). Otherwise, it was classified as unrelated. For O--O edges, same PO$_4$ was classified when both O atoms share the same closest P atom. O-pair was classified when the two O atoms are the closest and second-nearest oxygens of a  common proton, the latter again within $2.20$\,\AA, so that a proton bridges them (covalently bonded to one and hydrogen-bonded to the other).

\begin{figure*}[!b]
  \centering
  \includegraphics[width=\textwidth]{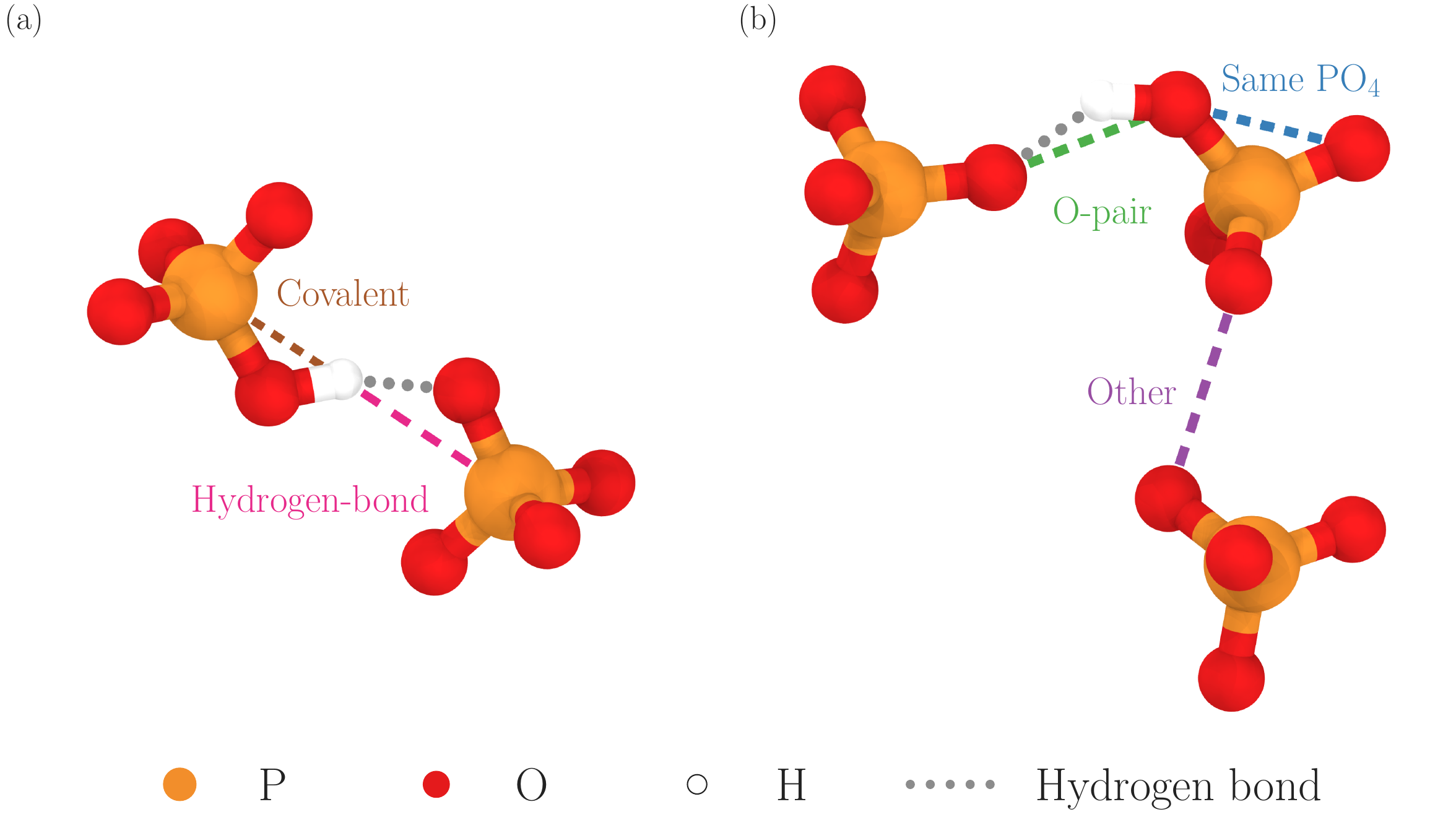}  
  \caption{Edge classifications in CDP.}
  \label{fig:app-CDP}
\end{figure*}

\end{document}